\documentclass{article}

\PassOptionsToPackage{numbers, compress}{natbib}
\usepackage[preprint]{neurips_2026}

\usepackage[utf8]{inputenc} 
\usepackage[T1]{fontenc}    
\usepackage{hyperref}       
\usepackage{url}            
\usepackage{booktabs}       
\usepackage{amsfonts}       
\usepackage{nicefrac}       
\usepackage{microtype}      
\usepackage{xcolor}         

\usepackage[ruled,vlined,linesnumbered]{algorithm2e}
\usepackage{amsmath,amssymb,bm}
\usepackage{multirow}
\usepackage{cancel}
\usepackage{comment}
\usepackage{array}
\newcolumntype{C}[1]{>{\centering\arraybackslash}m{#1}}
\usepackage{makecell}
\usepackage[normalem]{ulem}
\usepackage{graphicx}
\usepackage{caption}
\usepackage{wrapfig}
\usepackage{hhline}
\usepackage[dvipsnames]{xcolor}
\usepackage{stmaryrd}

\usepackage{amsthm}

\newtheorem{proposition}{Proposition}
\newtheorem{lemma}{Lemma}

\newtheorem{definition}{Definition}
\newtheorem{assumption}{Assumption}

\DeclareMathOperator*{\argmin}{arg\,min}

\title{Sentence Curve Language Models}

\author{%
  DongNyeong Heo \\
  Ulsan National Institute of Science and Technology \\
  Ulsan, Republic of Korea 44919 \\
  \texttt{sjglsks@gmail.com} \\
  \And
  Taehwan Kim \\
  Ulsan National Institute of Science and Technology \\
  Ulsan, Republic of Korea  44919 \\
  \texttt{taehwankim@unist.ac.kr} \\
  \And
  Heeyoul Choi \\
  Handong Global University \\
  Pohang, Republic of Korea 37554 \\
  \texttt{hchoi@handong.edu} \\
}

\begin{document}

\maketitle

\begin{abstract}
  Language models (LMs) are a central component of modern AI systems, and diffusion language models (DLMs) have recently emerged as a competitive alternative. Both paradigms rely on word embeddings not only to represent the input sentence, but also—implicitly—to represent the target sentence that backbone models are trained to predict. We argue that such static embedding of the target word is insensitive to neighboring words, encouraging locally accurate word prediction while global sentence structure is less emphasized. To address this, we propose a continuous sentence representation, termed \textbf{sentence curve}, defined as a spline curve whose control points affect multiple words in the sentence. Based on this representation, we introduce \textbf{sentence curve language model} (SCLM), which extends DLMs to predict sentence curves instead of the static word embeddings. We theoretically show that sentence curve prediction induces a regularization effect that promotes global structure modeling, and characterize how different sentence curve types affect this behavior. Empirically, SCLM achieves state-of-the-art performance among DLMs on IWSLT14 and WMT14, shows stable training without burdensome knowledge distillation, and demonstrates promising potential compared to discrete DLMs on LM1B.
\end{abstract}




\section{Introduction}
Language models (LMs) have demonstrated remarkable impact in real-world applications \cite{team2023gemini, achiam2023gpt}. Recently, diffusion language models (DLMs) \cite{sahoo2024simple, nie2025large} have attracted increasing attention due to their distinctive advantages, such as iterative refinement, bidirectional context awareness, and fine-grained controllability. A fundamental component of both LMs and DLMs is the word embedding \cite{mikolov2013distributed}, which provides a distributed representation of individual words. Accordingly, a sentence is represented as a sequence of word embeddings. In particular, not only the input sentence but also the target sentence, which the model is enforced to predict, can be viewed as a sequence of word embeddings; we provide a detailed justification of this perspective in Section~\ref{sec:word_embedding_prediction}.

Although word embeddings can encode rich lexical information, they remain static with respect to neighboring words within a sentence. Using such embeddings as target representations therefore forces the model to predict a sequence of context-independent vectors. We argue that this static target representation can cause models to overfit to word-level local structure while neglecting sentence-level global structure. This issue becomes more pronounced when input representations are noisy, as in non-autoregressive (non-AR) decoding and DLMs, where it can lead to practical problems such as the multimodality problem—i.e., weak modeling of dependencies between target tokens \cite{gu2017non, feng2025theoretical}—sometimes referred to as `mode mixing' or `mode averaging'.
Despite extensive research on input representations, the representation of target sentences has received comparatively little attention.

\begin{center}
    \includegraphics[width=1.0\linewidth]{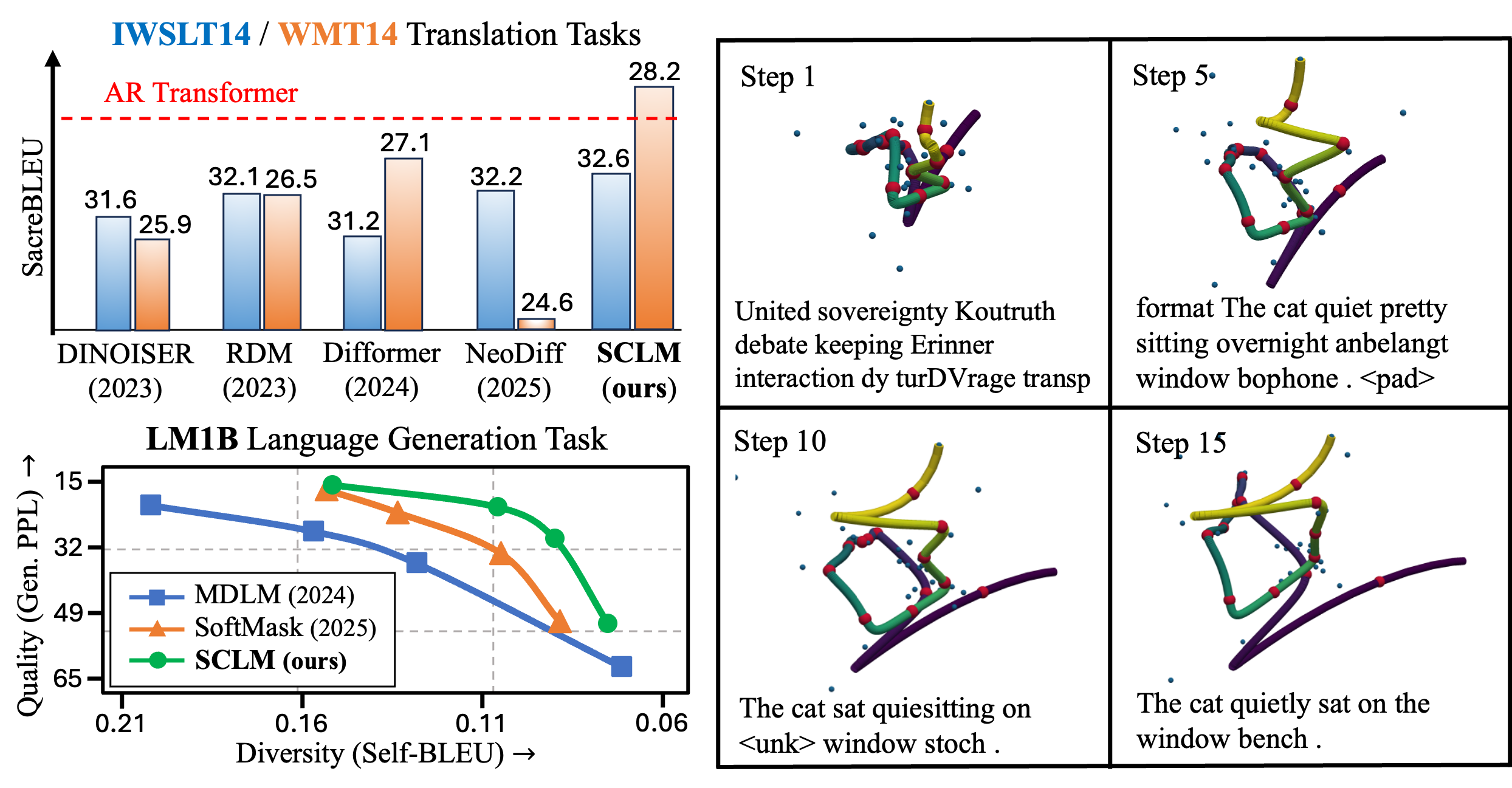}
    \captionof{figure}{\textbf{Left}: Performance comparison of SCLM with prior models on translation tasks (IWSLT14 and WMT14 English–German, top) and generic language modeling (LM1B, bottom). \textbf{Right}: Example of sentence curve generation by SCLM in a translation task, where the reference sentence is \textit{“The cat sat quietly on the window sill.”} Red points on the curve represent the final word embeddings.
    } 
    \label{fig:front_page}
\end{center}

In time-series modeling, \cite{hug2020introducing} proposed representing discrete data using continuous spline curves and training neural networks to predict the curve rather than individual data points, effectively mitigating the multimodality problem across several time-series domains. Motivated by the work and the fact that a sentence is likely a sequence of symbolic elements, we propose \textit{\textbf{sentence curve}}, a spline curve that traverses the word embeddings in vector space. Because the shape of sentence curve changes coherently with constituent words in the sentence, it can capture sentence-level global structure. Building on non-AR and semi-AR decoding models, such as DLMs, we introduce \textit{\textbf{sentence curve language model}} (SCLM), which predicts sentence curves and generates sentences through them, rather than relying on conventional static word embeddings (see the right side of Figure~\ref{fig:front_page}, and Figure~\ref{fig:extra_curve_generation_examples}). We further provide theoretical evidence that sentence curve prediction encourages models to focus more on global structure compared to the conventional approach. 

Through extensive experiments, we demonstrate that our proposed SCLMs achieve state-of-the-art (SOTA) performance among Gaussian DLMs on machine translation benchmarks, including IWSLT14 \cite{cettolo2014report} and WMT14 \cite{bojar2014findings} (Figure~\ref{fig:front_page}, top-left), and even surpass AR Transformers \cite{vaswani2017attention} on some tasks. We further show that SCLMs can be trained stably without sequence-level knowledge distillation (KD) \cite{kim2016sequence}, which is commonly used to mitigate the multimodality problem and requires a burdensome pre-trained AR teacher. Additional analyses, including token-level distance correlation and qualitative translation examples, consistently indicate that SCLM effectively addresses this issue. Beyond its advantages on translation tasks, we further evaluate SCLM on generic language modeling using LM1B, comparing it with prior discrete DLMs such as MDLM \cite{sahoo2024simple} and SoftMask \cite{hersche2025soft} (Figure~\ref{fig:front_page}, bottom-left). Together, we argue that these theoretical and empirical results suggest sentence curves as a promising direction for future non-AR or semi-AR models.

Our contributions can be summarized as follows:
\begin{itemize} 
    \item \textbf{Sentence Curve-based Target Reformulation}: We propose \textit{sentence curves}, a continuous curve-based representation of sentences, as an alternative target for language modeling. 
    \item \textbf{Sentence Curve Language Model}: We incorporate sentence curve prediction into non-AR models and instantiate it with DLMs as the sentence curve language model (SCLM). 
    \item \textbf{Theoretical Framework for Global Structure Modeling}: We develop a framework explaining how sentence curve prediction encourages sentence-level global structure regularization. 
    \item \textbf{Empirical Validation}: Through extensive experiments and analyses, we show that SCLM effectively mitigates the multimodality problem in DLMs and highlights its potential for non-AR language modeling. 
\end{itemize}

\section{Backgrounds}
\subsection{Notational Preliminaries}


Throughout this paper, we consider words and their embedding vectors \cite{mikolov2013distributed}. A word $y \in \mathcal{V}$ is a categorical variable over the vocabulary $\mathcal{V}$, with embedding $\mathbf{e}_y=E \delta_y \in \mathbb{R}^{d}$, where $\delta_y$ is a one-hot vector and $E=[\mathbf{e}_1,\cdots, \mathbf{e}_{|\mathcal{V}|}]\in \mathbb{R}^{d \times |\mathcal{V}|}$ is the embedding matrix. A sentence of length $L$ is denoted by $Y = (y_1, \cdots, y_L)$, or equivalently by its embedding sequence $E_Y=[\mathbf{e}_{y_1},\cdots,\mathbf{e}_{y_L}]$. We use $p_{\theta}(Y)$ to denote the model likelihood and $p_{data}(Y)$ the data distribution. Optimization is performed using cross-entropy (CE) and Kullback–Leibler (KL) divergence, which are formalized as follows: $CE_{Y}(\theta) = \mathbb{E}_{p_{data}(Y|X)}[-\log p_{\theta}(Y|X)]$ and $KL_{Y}(\theta) = \mathbb{E}_{p_{data}(Y|X)}[\log \frac{p_{data}(Y|X)}{p_{\theta}(Y|X)}]$ where $X$ denotes a conditioning variable (e.g., a source sentence or a noised input).



\subsection{Diffusion Language Models}
\label{subsec:diffusion_language_models}

Similar to diffusion models \cite{ho2020denoising}, DLMs consist of a forward diffusion process that progressively adds noise and a reverse denoising process that removes it. Depending on whether these processes operate in continuous or discrete spaces, DLMs are categorized into Gaussian (continuous) and discrete variants. We briefly review Gaussian DLMs and contrast them with discrete DLMs.

In Gaussian DLMs, a discrete target sentence $Y$ is first mapped into a continuous embedding sequence $E_Y=[\mathbf{e}_{y_i}]_{i=1}^{L}$, which is treated as clean latent variables $[\mathbf{e}_i^{0}]_{i=1}^{L}$. The forward and reverse processes on a latent variable $\mathbf{e}^{0}\in [\mathbf{e}_{i}^{0}]_{i=1}^{L}$ are, respectively, defined as
\begin{align}
    q(\mathbf{e}^{t}|\mathbf{e}^{0}) = \mathcal{N}\left( \mathbf{e}^{t} ; \sqrt{\bar{\alpha}^t}\mathbf{e}^{0}, (1-\bar{\alpha}^t)\mathbf{I}_d \right), 
    \qquad p_{\theta}(\mathbf{e}^{t-1}|\mathbf{e}^{t}) = \mathcal{N}\left( \mathbf{e}^{t-1}; \mathbf{\mu}_{\theta}(\mathbf{e}^{t},t), (1-\alpha^t)\mathbf{I}_d \right). \nonumber
\end{align}
Here, $\mathbf{e}^t$ denotes the latent variable at diffusion step $t\in\{1,\cdots,T\}$, $\alpha^t$ is a predefined noise schedule, and $\bar{\alpha}^t=\prod_{i=1}^{t}\alpha^i$. $\mathcal{N}(\cdot;\mathbf{\mu},\mathbf{\Sigma})$ denotes the Gaussian distribution that has $\mathbf{\mu}$ mean and $\mathbf{\Sigma}$ covariance parameters. $\mathbf{I}_d$ is $d \times d$ identity matrix. Most DLMs adopt an efficient parameterization in which the reverse model directly predicts the clean latent variable $\hat{\mathbf{e}}_{\theta}^{0}(\mathbf{e}^{t},t) := \frac{1}{\sqrt{\bar{\alpha}^t}}\mathbf{\mu}_{\theta}(\mathbf{e}^{t},t)$. Training of Gaussian DLMs is performed by minimizing the following objective:
\begin{align}
    \mathcal{L}_{g}(y;\theta,E) &= \mathbb{E}_{t,\mathbf{e}^t} \left[ \|\mathbf{e}^{0}-\hat{\mathbf{e}}_{\theta}^{0} \|^{2} \right] -\log p(y|E^\top\hat{\mathbf{e}}_{\theta}^{0}). \label{eq:gdlm_objective}
\end{align}
The first term (diffusion loss) encourages accurate reconstruction of the clean embedding, while the second term (anchor loss) aligns the predicted continuous latent variable with discrete word targets. 

In contrast, discrete DLMs operate directly in the discrete word space, treating the target sentence as clean categorical latent variables $Y^0 := Y$. Both forward and reverse processes are modeled using categorical distributions, and training minimizes a negative evidence lower bound objective (ELBO). In MDLM \cite{sahoo2024simple}, this objective simplifies to a noise-scaled CE loss:
\begin{align}
    \mathcal{L}_{m}(y;\theta,W) = -\mathbb{E}_{t,y^t}\left[\frac{\alpha^{t'}}{1-\alpha^t} \log p_{\theta}(\hat{y}^0|y^t,t) \right], \quad p_{\theta}(\hat{y}^0|y^t,t) = \text{softmax}(W^\top \mathbf{h}_{\theta}(y^t,t)), \label{eq:mdlm_projection_layer}
\end{align}
where $W\in\mathbb{R}^{d \times |\mathcal{V}|}$ denotes the projection (“logit”) matrix and $\mathbf{h}_{\theta}(y^t,t)$ is the hidden state produced by the backbone sequence model (e.g., Transformer \cite{vaswani2017attention}). We refer readers to \cite{sahoo2024simple} for further details.

In contrast to AR models, non-AR models (e.g., DLMs) assume conditional independence across tokens, $p(Y)=\prod_{i=1}^L p(y_i)$, enabling parallel generation. However, this leads to the multimodality problem \cite{gu2017non}: without inter-token dependencies, tokens may be generated from inconsistent modes, especially when multiple valid targets exist \cite{huang2022learning}. Prior approaches, including sequence-level knowledge distillation (KD) \cite{kim2016sequence}, and latent variable modeling \cite{shu2020latent, heo2023shared}, provide partial mitigation but do not fully resolve training instability and often require additional remedies \cite{wu2023ar, feng2025theoretical}.

\section{Static Word Embedding Prediction}
\label{sec:word_embedding_prediction}

In this section, we clarify our perspective on target representations in LMs and DLMs. Specifically, we view the backbone model parameterized by $\theta$ as predicting the embedding of the target word. In Gaussian DLMs, the objective in Eq.~\ref{eq:gdlm_objective} explicitly enforces the backbone output $\hat{\mathbf{e}}_{\theta}^0$ to match $\mathbf{e}^0$, corresponding to the target word embedding $\mathbf{e}_y$. A similar interpretation holds for discrete DLMs if the logit weight matrix $W$ in Eq.~\ref{eq:mdlm_projection_layer} (right) is regarded as a target embedding matrix: the cross-entropy loss encourages the backbone output $\mathbf{h}_{\theta}$ to align with the embedding of the target word in $W$. This connection is particularly clear when the input embedding and logit matrices are shared, i.e., $E = W$. For simplicity, we denote the target word embedding as $\mathbf{e}_y$ throughout the remainder of this paper.

We formalize this viewpoint by deriving a lemma that characterizes when the target word embedding is the optimal output of the backbone model $H_{\theta}=[\mathbf{h}_{\theta,1},\cdots,\mathbf{h}_{\theta,L}]$ under the following generic maximum likelihood estimation (MLE) pipeline:
\begin{definition}[Generic MLE Pipeline] \label{def:generic_mle_pipeline}
$H_{\theta} \rightarrow E^\top H_{\theta} \rightarrow \text{softmax}(E^\top H_{\theta}) \rightarrow \min_{\theta} \, CE_Y(\theta).$
\end{definition}
This pipeline corresponds to standard logistic regression-based classification: (1) computing backbone outputs $H_{\theta}$, (2) projecting them to logits via $E^\top H_{\theta}$, (3) applying softmax to obtain likelihoods, and (4) optimizing with CE loss. It is ubiquitous in neural classifiers, including LMs and discrete DLMs. Under mild assumptions—unit-norm constraints on $\mathbf{e}_y$ and $\mathbf{h}_{\theta}$, and local isotropy of embeddings around $\mathbf{e}_y$—we obtain the following result:
\begin{lemma}[Optimal Solution of Maximum Likelihood Estimation] \label{lemma:optimal_solution_mle}
    Under Assumptions \ref{assumption:unit_norm} and \ref{assumption:local_istropy}, the optimal solution of the Generic MLE Pipeline (Definition~\ref{def:generic_mle_pipeline}) satisfies
    \begin{align}
        E_Y = \argmin_{H_{\theta}} CE_Y(\theta) = \argmin_{H_{\theta}} \mathbb{E}[-\log (\text{softmax}(E^\top H_{\theta}))]
    \end{align}
\end{lemma}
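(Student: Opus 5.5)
Because the cross-entropy under conditional independence is a sum of per-position terms, $CE_Y(\theta)=\sum_{i=1}^{L}\mathbb{E}[-\log \text{softmax}(E^\top \mathbf{h}_{\theta,i})_{y_i}]$, the minimization over $H_{\theta}$ separates into $L$ independent problems, one per column $\mathbf{h}_{\theta,i}$. It therefore suffices to show that for a single target word $y$ the minimizer of
\begin{align}
    \ell(\mathbf{h}) = -\mathbf{e}_y^\top \mathbf{h} + \log \sum_{v\in\mathcal{V}} \exp(\mathbf{e}_v^\top \mathbf{h}) \nonumber
\end{align}
over unit-norm $\mathbf{h}$ (Assumption~\ref{assumption:unit_norm}) is $\mathbf{h}=\mathbf{e}_y$. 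Stacking these minimizers column-wise then gives $H_{\theta}^\ast=E_Y$, which is the claim of Lemma~\ref{lemma:optimal_solution_mle}.

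To handle the single-word problem, I will use Lagrange conditions on the sphere. With $p_v(\mathbf{h})=\text{softmax}(E^\top\mathbf{h})_v$, the gradient is $\nabla\ell(\mathbf{h})=-\mathbf{e}_y+\sum_v p_v(\mathbf{h})\mathbf{e}_v=-(1-p_y)\mathbf{e}_y+\sum_{v\neq y}p_v\mathbf{e}_v$. Stationarity on the sphere requires the tangential component to vanish, i.e.\ $\nabla\ell(\mathbf{h})=\lambda\mathbf{h}$. At $\mathbf{h}=\mathbf{e}_y$ the logits are $\mathbf{e}_v^\top\mathbf{e}_y$. I read the local isotropy assumption (Assumption~\ref{assumption:local_istropy}) as saying that the other embeddings are distributed around $\mathbf{e}_y$ symmetrically in the directions orthogonal to $\mathbf{e}_y$. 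Under that reading, the weighted sum $\sum_{v\neq y}p_v(\mathbf{e}_v-(\mathbf{e}_v^\top\mathbf{e}_y)\mathbf{e}_y)$ of orthogonal components cancels, because the weights $p_v$ depend only on $\mathbf{e}_v^\top\mathbf{e}_y$. So the gradient is parallel to $\mathbf{e}_y$, and $\mathbf{e}_y$ is a critical point.

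It remains to show that this critical point is the minimizer and not merely stationary. First, I will bound the linear part: by Cauchy--Schwarz, $-\mathbf{e}_y^\top\mathbf{h}\ge -1$, with equality iff $\mathbf{h}=\mathbf{e}_y$. Second, I will handle the log-partition term $Z(\mathbf{h})=\log\sum_v\exp(\mathbf{e}_v^\top\mathbf{h})$. Write $\mathbf{h}=\cos\phi\,\mathbf{e}_y+\sin\phi\,\mathbf{u}$ with $\mathbf{u}\perp\mathbf{e}_y$. By isotropy, averaging over rotations of $\mathbf{u}$ and applying Jensen to the convex function $Z$ shows that the orthogonal deviation does not decrease $Z$ to first order. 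The cleanest route is to compare the $\phi$-dependence directly: the decrease of $-\cos\phi$ toward $-1$ dominates any change in $Z$ whenever the negative-word logits are not aligned with $\mathbf{e}_y$. I expect this second-order comparison to be the main obstacle. The cross-entropy is not minimized by $\mathbf{e}_y$ without assumptions, since $Z$ may pull $\mathbf{h}$ away from crowded neighbors. So I will lean on the isotropy assumption to make the $\mathbf{u}$-dependence of $Z$ symmetric, and verify the sign of the second derivative at $\phi=0$ explicitly, possibly restricting the claim to a local optimum in the neighborhood where isotropy holds.

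Finally, I will note the expectation over $p_{data}(Y|X)$. If the target is deterministic given $X$, the argument above applies directly. Otherwise, I will state the lemma for the realized target $Y$, consistent with the form $E_Y$ in the statement.
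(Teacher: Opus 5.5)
Your reduction to a single target word and your stationarity step are exactly the paper's proof. The paper writes the tangential Lagrange condition, substitutes $\mathbf{h}=\mathbf{e}_y$, and cancels the tangential part by reading Assumption~\ref{assumption:local_istropy} as $\sum_k p(k\,|\,\mathbf{e}_y)\,\mathbf{u}_k=\mathbf{0}$, i.e.\ the mean of the tangent components under the softmax weights at $\mathbf{e}_y$. That weighted first-moment identity is all you need, so you do not have to strengthen the assumption into a symmetry of the embedding configuration. The paper then stops and declares $\mathbf{e}_y$ the argmin on the strength of stationarity alone. Its $f$ even has the sign of the log-likelihood rather than the loss, and stationarity cannot detect that. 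So you are right that minimality needs a separate argument.

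Your plan for that argument, however, does not close it. Averaging over rotations of $\mathbf{u}$ and applying Jensen is not licensed. Assumption~\ref{assumption:local_istropy} is a single vector identity at $\mathbf{e}_y$, not rotational invariance of a finite set of embeddings, and it says nothing about other points of the sphere. The second-derivative check does go through: with $a_v=\mathbf{e}_v^\top\mathbf{u}$ and $p_v=p_v(\mathbf{e}_y)$, one finds $\ell''(0)=\sum_v p_v(1-\mathbf{e}_v^\top\mathbf{e}_y)+\mathrm{Var}_p(a_v)\ge 0$. But this yields only a local minimum, which is weaker than the argmin claimed.

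The missing idea is that $\ell$ is convex on all of $\mathbb{R}^d$ (log-sum-exp minus a linear term). Your own computation already gives the full gradient at $\mathbf{e}_y$. Writing $\mathbf{e}_v=(\mathbf{e}_v^\top\mathbf{e}_y)\mathbf{e}_y+\mathbf{u}_v$ and using the isotropy identity, $\nabla\ell(\mathbf{e}_y)=-\mu\,\mathbf{e}_y$ with $\mu=\sum_v p_v(\mathbf{e}_y)\,(1-\mathbf{e}_v^\top\mathbf{e}_y)\ge 0$. The supporting-hyperplane inequality then gives, for every $\mathbf{h}$ with $\|\mathbf{h}\|\le 1$,
\[
\ell(\mathbf{h})\;\ge\;\ell(\mathbf{e}_y)+\nabla\ell(\mathbf{e}_y)^\top(\mathbf{h}-\mathbf{e}_y)\;=\;\ell(\mathbf{e}_y)+\mu\,(1-\mathbf{e}_y^\top\mathbf{h})\;\ge\;\ell(\mathbf{e}_y),
\]
where the last step is Cauchy--Schwarz. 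This is global minimality on the sphere (indeed on the ball). It is also unique as soon as some $\mathbf{e}_v\neq\mathbf{e}_y$, since the softmax weights are strictly positive and hence $\mu>0$. This one line replaces your Jensen and second-order discussion, and it also closes the gap the paper's proof leaves open.
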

The proof is provided in Appendix~\ref{appendix:proofs}. This lemma suggests that most LMs, including DLMs, effectively train their backbones to predict the target embedding sequence. Because these embeddings are static with respect to neighboring words\footnote{Here, “static” does not imply that embeddings or logit weights are fixed during training.}, the framework tends to emphasize local, word-level accuracy over sentence-level global structure. In AR models, this effect is mitigated by conditioning on ground-truth prefixes, which provide rich contextual information and implicitly capture inter-token dependencies. In contrast, under noisy or imperfect inputs—as in non-AR settings—this tendency is amplified, exacerbating the multimodality problem by favoring accurate single-word predictions without sufficient coordination across tokens.

\begin{figure}[]
    \makebox[\textwidth][c]{%
    \includegraphics[width=1.2\textwidth]{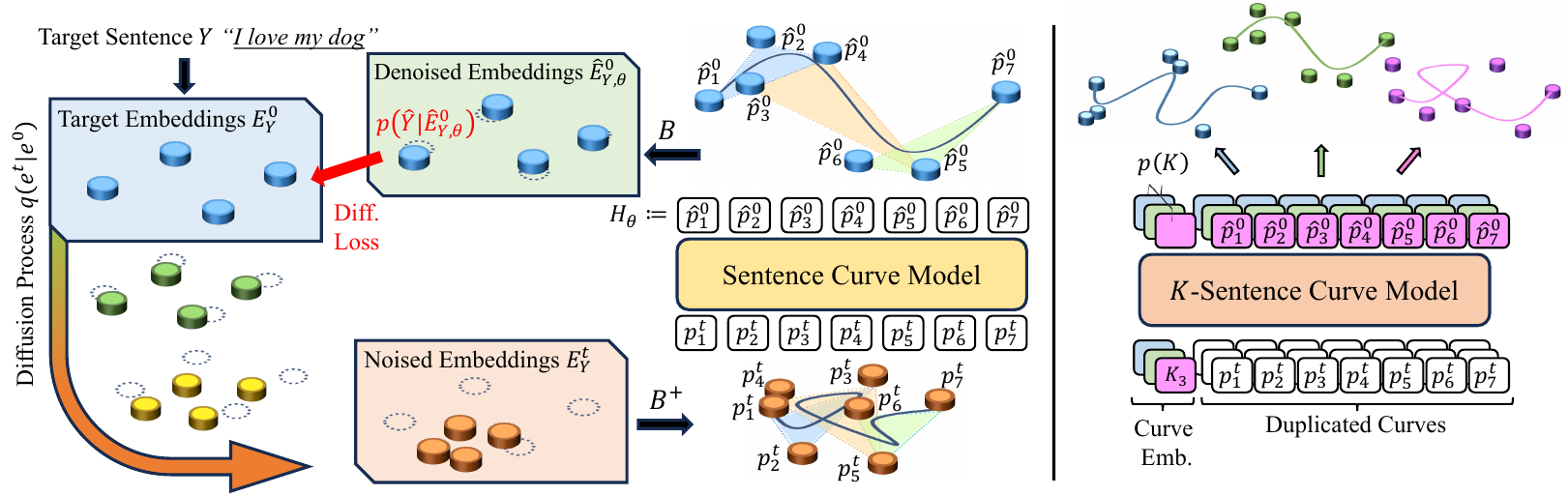}
    }
    \caption{
    (Left) SCLM training at step $t$: target embeddings are noised, mapped to a sentence curve, denoised by the backbone model, and mapped back to embeddings
    ; our modification is limited to these input/output curve mappings. (Right) Illustration of $K$-sentence curve prediction.
    }
    \label{fig:sclm_overview}
\end{figure}

\section{Sentence Curve}

In this section, we introduce sentence curves, motivated by time-series probabilistic curve modeling \cite{hug2020introducing}. We first formalize sentence curves, then incorporate sentence curve prediction into DLMs to obtain sentence curve language models (Figure~\ref{fig:sclm_overview}). Finally, we explain how this formulation promotes sentence-level global structure modeling while mitigating overfitting to word-level local structure.

\subsection{Definition of Sentence Curve}
\label{subsec:sentence_curve_definition}

We construct sentence curves using B-spline functions. A B-spline curve is defined as $\mathbf{z}(\gamma) = \sum_{j=1}^{N} b_{\eta}(\gamma)_j \mathbf{p}_j \in \mathbb{R}^{d}$, where $P = [\mathbf{p}_j]_{j=1}^{N} \in \mathbb{R}^{d \times N}$ denotes the control points and $N$ is their number. The basis function $b_{\eta}(\gamma) \in [0,1]^N$, satisfying $\sum_j b_{\eta}(\gamma)_j = 1$, is a degree-$\eta$ B-spline basis that determines the contribution of neighboring control points to each curve segment. 

\begin{wrapfigure}{r}{0.5\textwidth}
    \centering
    \vspace{-0pt}
    \includegraphics[width=0.5\textwidth]{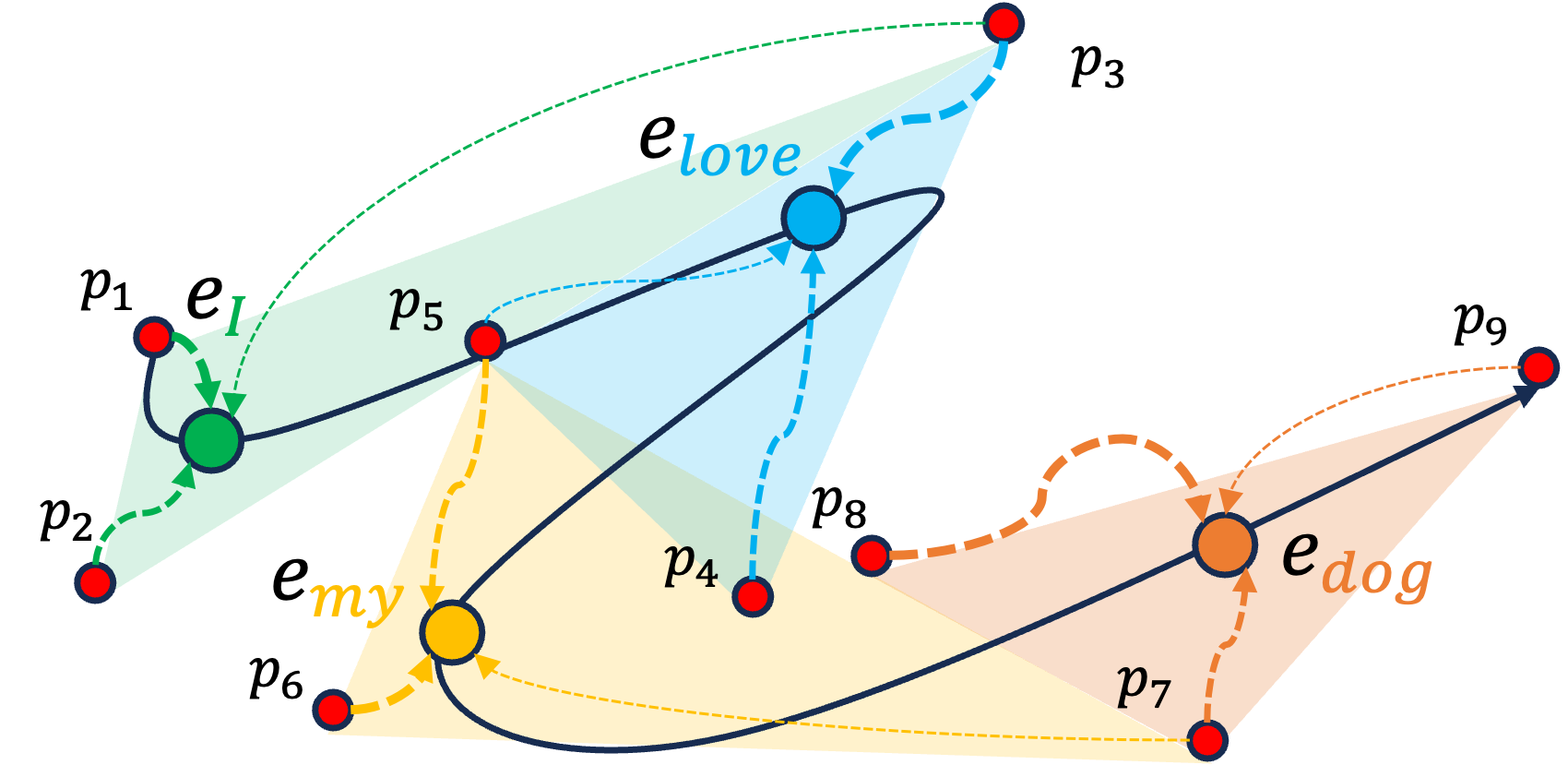}
    \caption{Illustration of control points and embeddings for the sentence \textit{“I love my dog”} with $\eta=3$. Small red dots denote control points ($\mathbf{p}$), and larger dots denote word embeddings ($\mathbf{e}$). The black curve is the spline defined by the control points, while colored dotted lines indicate their contributions to embeddings, weighted by $b_{\eta,j}$.
    }
    \label{fig:P_E_relationship}
    \vspace{-0pt}
\end{wrapfigure}
The parameter $\gamma$ denotes the curve index at which $\mathbf{z}(\gamma)$ is evaluated. Given a sentence of length $L$, we sample $L$ points at predefined indices $\Gamma = (\gamma_1, \ldots, \gamma_L)$, yielding $Z_\Gamma=\left[\mathbf{z}(\gamma_1),\cdots,\mathbf{z}(\gamma_L)\right]\in \mathbb{R}^{d\times L}$. We define these sampled points as the embedding sequence, i.e., $E_Y = Z_{\Gamma}$. Let $B_{\Gamma,\eta}:=[b_{\eta}(\gamma_1),\cdots,b_{\eta}(\gamma_L)]\in [0,1]^{N\times L}$. Then,
\begin{align}
    E_Y=P_Y B_{\Gamma,\eta}, \label{eq:p_e_relationship}
\end{align}
where $B_{\Gamma,\eta}$ is a linear mapping from control points to embeddings. We refer to $P_Y$ as the \textit{sentence curve} of $Y$. As illustrated in Figure~\ref{fig:P_E_relationship}, each control point contributes to multiple embeddings with different weights. Consequently, modifying a single embedding affects multiple control points, implying that control points may embed contextual, sentence-level structure.

In addition to the forward mapping $P \rightarrow E$, we define an approximate inverse $E \rightarrow P$ to match the number of control points for backbone inputs. Since $B_{\Gamma,\eta} \in [0,1]^{N \times L}$ is generally non-invertible when $N \neq L$, we use its pseudoinverse $B^{+} := (B^\top B)^{-1} B^\top$, yielding $P_Y \approx E_Y B^{+}$. This provides a least-squares solution consistent with the forward mapping. For brevity, we omit $\Gamma$ and $\eta$.

\subsection{Sentence Curve Language Models}
\label{subsec:sentence_curve_language_model}

Based on the definition of sentence curves (Section~\ref{subsec:sentence_curve_definition}), we propose a modified MLE pipeline:
\begin{definition}[Sentence Curve Prediction-based MLE Pipeline] \label{def:sentence_curve_prediction_pipeline}
\begin{align}
\underbrace{H_{\theta} \rightarrow H_{\theta}B}_{\text{sentence curve mapping}} \rightarrow E^\top (H_{\theta}B) \rightarrow \text{softmax}(E^\top (H_{\theta}B)) \rightarrow \min_{\theta} CE_Y(\theta). \nonumber
\end{align}
\end{definition}
This pipeline inserts a linear mapping $H_{\theta} \rightarrow H_{\theta}B$, replacing $H_{\theta}$ with $H_{\theta}B$ in the generic MLE pipeline (Definition~\ref{def:generic_mle_pipeline}). By Lemma~\ref{lemma:optimal_solution_mle}, the optimal solution satisfies $H_{\theta}B = E_Y = P_Y B$, implying that $H_{\theta}$ corresponds to the control points $P_Y$. Thus, sentence curve prediction trains the backbone to predict sentence curves rather than static word embeddings.

We incorporate this into DLMs to construct SCLMs, focusing on Gaussian DLMs (the same applies to discrete DLMs). Given clean embeddings $E_Y^0$ and their noised version $E_Y^t$, we first apply $E \rightarrow P$ using $B^{+}$ to obtain $P_Y^t$. The backbone processes $P_Y^t$ to produce $H_{\theta}(P_Y^t, t)$, which is mapped back via $P \rightarrow E$ using $B$ to obtain $\hat{E}_{Y,\theta}^0$ for loss computation. The overall process is illustrated in Figure~\ref{fig:sclm_overview} and further explained in Algorithm~\ref{alg:sclm}.
Overall, SCLM introduces two mappings—$E \rightarrow P$ at the input and $P \rightarrow E$ at the output—without modifying the backbone architecture. The same design applies to discrete DLMs by inserting these mappings before and after the backbone. 

While sentence curve prediction mitigates the token-level multimodality problem by encouraging coordinated predictions across tokens, it might not fully resolve multimodality at the sentence level. In particular, when multiple valid target sentence curves exist, a single predicted curve may represent an average of them. To address this, we propose $K$-sentence curve prediction, which predicts $K$ candidate curves. Specifically, we duplicate the input curve, prepend a learnable curve-specific token to each copy, and obtain $K$ predicted curves along with their embeddings. A shallow network estimates their probabilities. During training, we use a probability-weighted combination of the curves, while at inference we select the most probable one. The process of $K$-sentence curve prediction method is illustrated in the right side of Figure~\ref{fig:sclm_overview} and described in Algorithm~\ref{alg:k_sclm}.

\subsection{Theoretic Studies of Sentence Curve Prediction}
\label{subsec:theoretic_studies}

In this section, we analyze how sentence curve prediction regularizes the backbone to emphasize global structure, mitigating overfitting to word-level local structure. All proofs and full descriptions of following lemmas are provided in Appendix~\ref{appendix:proofs}. We begin by newly deriving the MLE objective:
\begin{lemma} [Sentence Curve Prediction Objective (Simplified)]
    Based on the setting in Section~\ref{subsec:sentence_curve_language_model}, the MLE objective can be expressed as
    \begin{align}
        CE_Y(\theta) = CE_P(\theta) - \mathbb{E}_{Y}[KL_{P|Y}(\theta)] + \mathcal{C}, \label{eq:mle_sentence_curve}
    \end{align}
    where $\mathcal{C}$ denotes terms constant with respect to $\theta$.
\end{lemma}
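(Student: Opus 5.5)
The plan is to treat the sentence curve $P$ as a latent variable sitting between the conditioning variable $X$ and the sentence $Y$, and to obtain Eq.~\ref{eq:mle_sentence_curve} from the chain rule of probability.

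Concretely, Definition~\ref{def:sentence_curve_prediction_pipeline} makes $p_\theta(Y\mid X)$ a function of the backbone output $H_\theta$ only through $H_\theta B$. Identifying $H_\theta$ with a predicted curve $P$, I would write the joint model as
\begin{align}
p_\theta(Y,P\mid X)=p_\theta(P\mid X)\,p(Y\mid P),
\end{align}
where the decoding factor $p(Y\mid P)=\text{softmax}(E^\top PB)$ depends on $\theta$ only through $E$. Here $E$ is regarded as fixed, or is absorbed into $\mathcal{C}$. On the data side, $p_{data}(Y,P\mid X)=p_{data}(Y\mid X)\,p_{data}(P\mid Y)$, where $p_{data}(P\mid Y)$ is the distribution of curves consistent with $Y$. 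By Eq.~\ref{eq:p_e_relationship}, this is concentrated on $\{P: PB=E_Y\}$, for instance around $E_YB^{+}$.

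The identity itself follows from Bayes' rule. For each $(Y,P)$,
\begin{align}
\log p_\theta(Y\mid X)=\log p_\theta(P\mid X)+\log p(Y\mid P)-\log p_\theta(P\mid Y,X).
\end{align}
Take the expectation over $p_{data}(Y\mid X)\,p_{data}(P\mid Y)$ and negate. The first term gives $CE_P(\theta)=\mathbb{E}[-\log p_\theta(P\mid X)]$. The second term is free of $\theta$ and goes into $\mathcal{C}$. For the third term, add and subtract $\log p_{data}(P\mid Y)$: this turns $\mathbb{E}[\log p_\theta(P\mid Y,X)]$ into $-\mathbb{E}_Y[KL_{P|Y}(\theta)]$ plus the negative entropy of $p_{data}(P\mid Y)$, and that entropy is also $\theta$-free. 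Collecting the pieces gives $CE_Y(\theta)=CE_P(\theta)-\mathbb{E}_Y[KL_{P|Y}(\theta)]+\mathcal{C}$, which is the stated form.

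The step I expect to be hardest is justifying the factorization and the $\theta$-independence of $p(Y\mid P)$ while respecting the paper's setting. The paper's setting is a deterministic pipeline, and $H_\theta\mapsto H_\theta B$ has a nontrivial null space whenever $N\neq L$, so $p_{data}(P\mid Y)$ is degenerate on an affine subspace. I would make the argument rigorous by giving $P$ a Gaussian model $p_\theta(P\mid X)=\mathcal{N}(H_\theta,\sigma^2 I)$, consistent with the Gaussian DLM setting. I would also take $p_{data}(P\mid Y)$ to be a nondegenerate Gaussian centered at $E_YB^{+}$, so that every entropy and KL term is finite, and then state the full lemma in the appendix under these assumptions.
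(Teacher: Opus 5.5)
Your proposal is correct and follows essentially the same route as the paper. Both arguments apply Bayes' rule with $P$ as a latent variable, treat $\log p_\theta(Y\mid P,X)=\log p(Y\mid P)$ as $\theta$-free, take the expectation under $p_{data}(Y,P\mid X)$, and add and subtract the log data posterior to obtain $-\mathbb{E}_Y[KL_{P|Y}(\theta)]$ plus a $\theta$-free entropy term.

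The differences are minor. The paper keeps the data posterior as $p_{data}(P\mid Y,X)$; your $p_{data}(P\mid Y)$ is a special case, and the identity goes through verbatim for the general one. The paper also justifies the $\theta$-independence of the decoder term by the deterministic many-to-one map $P\xrightarrow{B}Y$ under $\sigma^2\to 0$, rather than through the softmax. Your nondegenerate-Gaussian device for keeping the entropy and KL finite is a real gain in rigor, which the paper's degenerate-fiber treatment lacks.
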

This decomposition shows that minimizing $CE_Y(\theta)$ implicitly minimizes $CE_P(\theta)$, encouraging the model to assign high likelihood to valid sentence curves that map to $Y$. Because each control point influences multiple words, this objective promotes sensitivity to sentence-level structure.

We next characterize how $CE_P$ emphasizes different error patterns:
\begin{lemma} [Error Importance Characterized by $B$ (Simplified)] \label{lemma:sclm_dynamics}
Let the embedding error be $V := E_Y - \hat{E}_{Y,\theta} \in \mathbb{R}^{d \times L}$ and $M := (B^\top B)^{-1}$. Defining $I(V) := \| V M^{1/2} \|_F^2$, $CE_P(\theta)$ and relative importance of global versus local errors $R(B)$ satisfy
\begin{align}
    CE_P(\theta) \propto \| V M^{1/2} \|_F^2, \quad \text{and} \quad R(B) = \frac{I(V_{\text{global}})}{I(V_{\text{local}})} \leq \frac{s_{\max}^2}{s_{\min}^2}, \nonumber
\end{align}
where $s_{\max}$ and $s_{\min}$ are the largest and smallest singular values of $B$, respectively. Here, $V_{\text{global}} = \mathbf{1}_{d \times L}/\sqrt{dL}$ and $V_{\text{local},i} = \mathbf{1}_d \delta_i^\top$ denote global and $i$-th local errors.
\end{lemma}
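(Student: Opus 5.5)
The plan is to make $CE_P(\theta)$ concrete as a Gaussian negative log-likelihood on control points, move the error from control-point space to embedding space through the inverse map $E\rightarrow P$, and then bound the resulting weighted norm using Rayleigh-quotient arguments for $M=(B^\top B)^{-1}$.

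\textbf{Step 1: $CE_P$ as a weighted squared error.} Following the setting of Section~\ref{subsec:sentence_curve_language_model}, I would model the control-point likelihood as an isotropic Gaussian centred at the backbone prediction, $p_\theta(P\mid Y)\propto \exp(-\tfrac{1}{2\sigma^2}\|P_Y-\hat P_{Y,\theta}\|_F^2)$. Then $CE_P(\theta)$ equals $\tfrac{1}{2\sigma^2}\|P_Y-\hat P_{Y,\theta}\|_F^2$ plus a constant. Both clean and predicted control points are tied to embeddings through the least-squares inverse $P\approx EB^{+}$, so the control-point error is $VB^{+}$. This gives
\begin{align}
\|VB^{+}\|_F^2=\operatorname{tr}\!\left(V\,B^{+}(B^{+})^\top V^\top\right).
\end{align}
The key algebraic identity is $B^{+}(B^{+})^\top = M$, and I would verify it directly from the pseudoinverse. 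With $M$ symmetric positive definite, the trace becomes $\operatorname{tr}(VMV^\top)=\|VM^{1/2}\|_F^2$, which proves the proportionality claim.

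\textbf{Step 2: decompose $I(V)$ row by row.} Writing $\mathbf{v}_k^\top$ for the $k$-th row of $V$, we have $I(V)=\sum_{k=1}^d \mathbf{v}_k^\top M\mathbf{v}_k$. The eigenvalues of $M$ lie in $[1/s_{\max}^2,\,1/s_{\min}^2]$, since they are the reciprocals of the eigenvalues $s^2$ of $B^\top B$.

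\textbf{Step 3: bound the global and local cases.}
\begin{itemize}
\item For $V_{\text{global}}$, every row equals $\mathbf{1}_L/\sqrt{dL}$. Hence $I(V_{\text{global}})=\tfrac{1}{L}\mathbf{1}_L^\top M\mathbf{1}_L\le 1/s_{\min}^2$ by the Rayleigh quotient.
\item For $V_{\text{local},i}$, every row equals $\delta_i$. Hence $I(V_{\text{local},i})=d\,M_{ii}\ge d\,\lambda_{\min}(M)=d/s_{\max}^2$.
\end{itemize}
Dividing gives $R(B)\le s_{\max}^2/(d\,s_{\min}^2)\le s_{\max}^2/s_{\min}^2$. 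If the full statement normalizes the local error differently (for instance by $1/\sqrt d$), the factor $d$ simply disappears and the same bound holds.

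\textbf{Main obstacle.} I expect the hardest part to be making Step~1 rigorous. This means fixing which pseudoinverse is meant so that $B^{+}(B^{+})^\top=M$ genuinely holds and $B^\top B$ is invertible. That requires full column rank, so I would state this as an assumption on $\Gamma$ and $N$, or otherwise replace inverses by Moore–Penrose inverses restricted to the range. It also means justifying the Gaussian form of $p_\theta(P\mid Y)$, whose variance only rescales the objective. The remaining steps are then routine spectral bounds.
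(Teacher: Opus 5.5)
Your proposal is correct and follows the paper's proof. Both arguments take a Gaussian NLL on the control points, write the error as $VB^{+}$, and use the identity $B^{+}(B^{+})^\top=(B^\top B)^{-1}=M$; they then bound the global error above by $\lambda_{\max}(M)=1/s_{\min}^2$ and the local error below by $\lambda_{\min}(M)=1/s_{\max}^2$. Your direct Rayleigh-quotient bounds are cleaner than the paper's single-coordinate eigen-expansion chain and keep the factor $d$ it drops; note only that the Gaussian should be placed on $p_\theta(P\mid X)$, as in the paper's definition of $CE_P$, rather than $p_\theta(P\mid Y)$, which does not affect the argument.
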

This result shows that error importance is shaped by $M=(B^\top B)^{-1}$: SCLMs emphasize directions aligned with the structure induced by $B$. In particular, when $B$ has dominant singular values, global errors are amplified relative to local ones, encouraging coherent sentence-level behavior. Conversely, when singular values are uniform, this distinction diminishes; in the limiting case where $B$ is identity, SCLM reduces to a standard LM with no global preference.


Notably, multiple sentence curves can correspond to the same target sentence. The set of valid curves for a given $Y$ forms a fiber set $\mathcal{F}_Y=\{P|PB=E_Y\}$, from which the model should select an appropriate sentence curve. The internal posterior $p(P|Y,X)$ induced by the second term in Eq.~\ref{eq:mle_sentence_curve} (i.e., $KL_{P|Y}(\theta) = \mathbb{E}_{p_{data}(P|Y,X)}[\log \frac{p_{data}(P|Y,X)}{p_{\theta}(P|Y,X)}]$) represents the probability of a sentence curve $P$ given $Y$ and $X$, and can be interpreted as additional supervision over the fiber beyond what is inferred from $X$ alone. Maximizing this KL term regularizes the model against strictly adhering to a particular, potentially suboptimal mapping from $Y$ to $P$, allowing it to explore the space of valid fibers and select more suitable representations. In practice, the true data distribution over fibers is unknown and difficult to estimate. Our use of the approximate inverse mapping $B^{+}$ (Section~\ref{subsec:sentence_curve_definition}) selects a representative element from each fiber, but may introduce bias and limit diversity. In this setting, the KL term acts as a regularizer that discourages collapse to a single fiber and helps preserve generalization across diverse sentence curve predictions.

\begin{table}[t]
\centering
\small
\caption{
SacreBLEU ($\uparrow$) results on IWSLT14 and WMT14. Among DLMs, the best and second-best results are shown in \textbf{bold} and \uline{underlined}, respectively. Scores without superscripts are taken from original papers, except for Transformer and Difformer, which are evaluated by us. Superscripts */\dag/\ddag/\S~ denote results reported in prior work \cite{ye2023diffusion, gao2024empowering, wu2023ar, arriola2025encoder}.}
\label{table:sacrebleu_results}
\begin{tabular}{lcccc}
\toprule
\textbf{Model} &
\makecell{\textbf{IWSLT} \textbf{En$\rightarrow$De}} &
\makecell{\textbf{IWSLT} \textbf{De$\rightarrow$En}} &
\makecell{\textbf{WMT} \textbf{En$\rightarrow$De}} &
\makecell{\textbf{WMT} \textbf{De$\rightarrow$En}} \\
\midrule
AR Transformer (beam=1) & 27.19 & 32.89 & 26.57 & 31.72  \\
AR Transformer (beam=5) & 28.24 & 33.83 & 27.71 & 33.00  \\
\addlinespace
\multicolumn{5}{l}{\textit{DLM baselines}} \\
DiffusionLM \cite{li2022diffusion}\textsuperscript{*} & - & 29.11 & 17.41 & - \\
CDCD \cite{dieleman2022continuous}\textsuperscript{\dag} & - & - & 19.7 & 25.4  \\
SeqDiffuSeq \cite{yuan2022seqdiffuseq} & 22.12 & 30.45 & 19.76 & 23.93  \\
DINOISER \cite{ye2023dinoiser} & 26.14 & 31.61 & 25.88 & \uline{30.30}  \\
GENIE \cite{lin2023text}\textsuperscript{\ddag} & 23.89 & 29.45 & - & -  \\
AR-Diffusion \cite{wu2023ar} & 26.01 & 31.80 & - & -  \\
RDM \cite{zheng2023reparameterized}\textsuperscript{*} & - & 32.14 & 26.54 & -  \\
MDLM \cite{sahoo2024simple}\textsuperscript{\S} & - & - & 18.4 & -  \\
Difformer \cite{gao2024empowering} w/o KD & 22.99 & 28.58 & - & - \\
Difformer \cite{gao2024empowering} w/ KD & \uline{27.21} & 31.19 & \uline{27.13} & 30.02 \\
WDR \cite{heo2024n} & 26.26 & 31.83 & - & -  \\
E2D2 \cite{arriola2025encoder} & - & - & 24.8 & -  \\
NeoDiff \cite{li2025unifying} & - & \uline{32.20} & 24.64 & -  \\
\midrule
\textbf{SCLM (ours) w/o KD} & 26.26 & 31.25 & - & - \\
\textbf{SCLM (ours) w/ KD} & \textbf{28.10} & \textbf{32.56} & \textbf{28.19} & \textbf{33.49} \\
\bottomrule
\end{tabular}
\vspace{-5pt}
\end{table}

\section{Experiments and Results}

Our experiments consist of two parts: translation and generic language modeling. For translation, we compare SCLM with Gaussian DLMs on standard benchmarks, providing both quantitative results and qualitative analyses that demonstrate improved global structure modeling and mitigation of the multimodality problem. For language modeling, we evaluate SCLM on LM1B against discrete DLMs (e.g., MDLM and SoftMask) to assess its effectiveness in unconditional generation.

\subsection{Datasets}

For DLM-based machine translation experiments, we use IWSLT14 En–De \cite{cettolo2014report} and WMT14 En–De \cite{bojar2014findings} datasets. We produced KD dataset for IWSLT14 by our AR Transformer model, and we adopt the publicly available KD dataset for WMT14 \cite{gao2024empowering}. For generic language modeling experiments, we use the LM1B dataset \cite{chelba2013one}. More details are provided in Appendix~\ref{appendix:dataset_detail}.

\subsection{Implementation and Configurations}
\label{subsec:implementation_configuration}

SCLM introduces several tunable hyperparameters, including the number of control points $N$, curve degree $\eta$, and the number of predicted sentence curves $K$. We set $N=\mathrm{round}(L N_{\text{ratio}})$ to adapt to sentence length $L$, and determine $\eta$ either by $\eta=\max(\mathrm{round}(N\eta_{\text{ratio}}),2)$ or by fixing $\eta=\eta_{\text{fix}}$. Curve sampling indices $\Gamma$ are defined as $L$ uniformly spaced points in $[0.01, 0.99]$. For each configuration, the matrices $B$ and $B^{+}$ are determined; to reduce overhead, we precompute them for all settings and sentence lengths $L \in \{2, \ldots, 250\}$.

We build SCLMs on top of Difformer \cite{gao2024empowering} and SoftMask \cite{hersche2025soft}, widely used baselines for conditional text generation and unconditional language modeling, respectively. Detailed configurations are provided in Appendix~\ref{appendix:dlm_experiment_detail}, with SCLM-specific hyperparameters listed in Table~\ref{table:sclm_hparams}. An ablation study on IWSLT14 De$\rightarrow$En is shown in Table~\ref{table:ablation_study}, where hyperparameters are selected based on validation performance. Notably, SCLM introduces no additional parameters in most settings, with only marginal overhead in the $K$-sentence curve variants.

\subsection{Machine Translation Experiment Results}

For translation evaluation, we use SacreBLEU \cite{post2018call}\footnote{Signature:\texttt{BLEU+case.mixed+numrefs.1+smooth.exp+tok.13a+version.1.5.1}} on the IWSLT14 and WMT14 benchmarks. We report scores from prior DLM studies on the same datasets, while Transformer and Difformer are evaluated under our setting. Table~\ref{table:sacrebleu_results} shows that SCLMs outperform previous DLM methods, with `SCLM w/ KD' achieving SOTA results across all benchmarks. Compared to the development baseline, Difformer, SCLM improves performance by 0.89/1.37 on IWSLT14 En$\rightarrow$De/De$\rightarrow$En and 1.06/3.47 on WMT14 En$\rightarrow$De/De$\rightarrow$En, respectively. Moreover, on the larger WMT14 benchmark, SCLMs not only surpass the AR Transformer (beam=5) in performance, but also achieve faster inference, providing up to 38\% speedup (refer Table~\ref{table:latency} in Appendix~\ref{appendix:dlm_experiment_detail}).
\begin{wrapfigure}{r}{0.45\textwidth}
    \centering
    \vspace{-0pt}
    \includegraphics[width=0.5\textwidth]{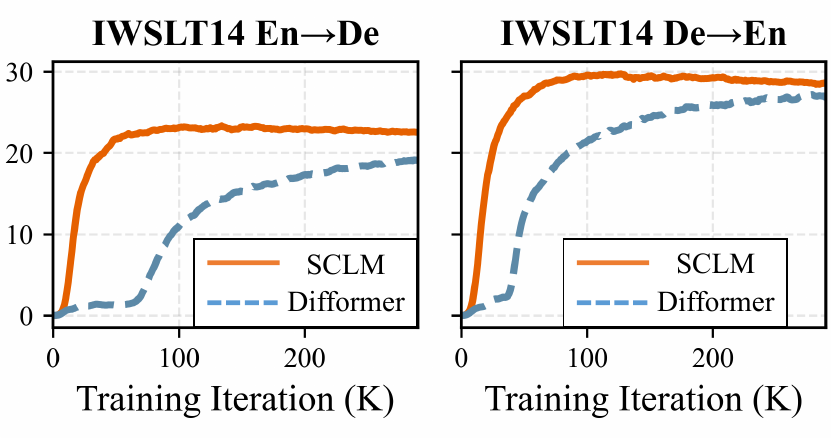}
    \caption{Validation SacreBLEU curves of Difformer and SCLM without KD.
    }
    \label{fig:wo_KD_validation}
    \vspace{-10pt}
\end{wrapfigure}

Figures~\ref{fig:front_page} (right) and~\ref{fig:extra_curve_generation_examples} illustrate sentence curve generation trajectories. Despite the inherent smoothing property of B-spline curves, SCLM produces expressive curves with dynamic inflection points that closely align with reference embeddings.

Notably, `SCLM w/o KD' achieves competitive performance against prior methods while significantly outperforming `Difformer w/o KD'. It also converges faster, as shown in Figure~\ref{fig:wo_KD_validation}. For a fair comparison, `Difformer w/o KD' is trained for 500K iterations to ensure sufficient convergence, with corresponding test scores reported in Table~\ref{table:sacrebleu_results}. We attribute these gains to improved global structure modeling in SCLM, which mitigates the multimodality problem and reduces reliance on KD.

\begin{table}[t]
\centering
\small
\caption{Generic language modeling results on LM1B. Each entry for `NFEs' and `Average' reports Gen. PPL ($\downarrow$) / self-BLEU ($\uparrow$) under different numbers of function evaluations (NFE).}
\label{table:lm1b_results}
\begin{tabular}{l|c|cccc|c}
\toprule
\textbf{Model} & \textbf{ELBO} ($\downarrow$) & \textbf{NFE=64} & \textbf{NFE=128} & \textbf{NFE=256} & \textbf{NFE=512} & \textbf{Average} \\
\midrule
MDLM & 34.55 & 63.76 / 0.08 & 38.52 / 0.13 & 30.74 / 0.16 & 24.41 / 0.20 & 39.36 / 0.14 \\
SoftMask & 34.04 & \textbf{52.62} / 0.09 & 36.07 / 0.11 & 26.14 / 0.13 & 20.60 / 0.15 & 33.86 / 0.12 \\
\midrule
\textbf{SCLM (ours)} & 33.83 & 53.10 / 0.08 & \textbf{32.51} / 0.09 & \textbf{24.82} / 0.11 & \textbf{19.43} / 0.15 & \textbf{32.47} / 0.11 \\
\bottomrule
\end{tabular}
\vspace{-10pt}
\end{table}

\vspace{-0pt}
\subsection{Language Generation Experiment Results}

For generic language generation evaluation, we use generative perplexity (Gen. PPL) and self-BLEU \cite{zhu2018texygen} to measure sample quality and diversity, respectively. Following prior work, we compute Gen. PPL using a pre-trained \texttt{GPT2-Large} \cite{radford2019language}. We compare SCLM against MDLM and SoftMask on the LM1B benchmark, using 256 generated samples from each model with the re-masking strategy \cite{wang2025remasking}. Table~\ref{table:lm1b_results} reports validation PPL (ELBO, upper bound) together with generation metrics under different numbers of function evaluations (NFE). Except for the `NFE=64' setting, SCLM achieves lower Gen. PPL than the baselines while also obtaining lower self-BLEU, indicating that it provides not only higher-quality generations but also greater diversity in generic language modeling as illustrated in Figure~\ref{fig:front_page} (bottom-left).

\subsection{Global Structure Modeling Analyses}

\begin{wrapfigure}{r}{0.37\textwidth}
    \centering
    \vspace{-35pt}
    \includegraphics[width=0.42\textwidth]{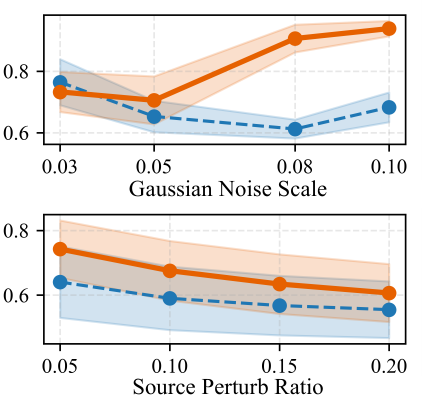}
    \vspace{-20pt}
    \caption{Distance correlation results of \textcolor{RoyalBlue}{Difformer} and our \textcolor{Bittersweet}{SCLM}. 
    }
    \label{fig:correlation_analysis}
    \vspace{-10pt}
\end{wrapfigure}
In this section, we provide evidence that SCLM enhances global structure modeling and mitigates the multimodality problem on models trained on IWSLT14. 
Firstly, we inject noise into the encoder on the valid set and measure how coherently target tokens respond in terms of distance correlation \cite{szekely2007measuring}. We consider two noises: (1) additive Gaussian noise to encoder hidden states, and (2) random masking of source tokens. The Gaussian noise is rescaled by the average hidden state norm to avoid trivial scale effects. In Figure~\ref{fig:correlation_analysis}, SCLM consistently yields higher correlations than the baseline, indicating more coherent token responses to injected noises.

Secondly, we present translation examples from SCLM and Difformer (w/o KD) in Table~\ref{table:translation_examples}. Difformer often produces phrases with mixed incompatible fragments (e.g., \textit{“any particular surface at any surface”} in Example 1) or grammatical inconsistencies (e.g., conflicting constructions in Example 3), implying the multimodality problem. In contrast, SCLM generates more coherent outputs with improved lexical and syntactic consistency.

\begin{table*}[t]
\centering
\small
\caption{Comparison of translation outputs of our SCLM and Difformer trained without KD.}
\label{table:translation_examples}
\begin{tabular}{l p{0.27\textwidth} p{0.3\textwidth} p{0.27\textwidth}}
\toprule
\textbf{} & \textbf{Ground-truth} & \textbf{Difformer} & \textbf{SCLM (ours)} \\
\midrule

1 &
\textit{
... project a light onto any surface and then , with a laser pointer , draw on it ...} &
\textit{
... project ster down a light on \textcolor{red}{any particular surface at any surface}, and and then drawing with with a laser pointer on it ...
} &
\textit{
... project a light \textcolor{red}{on top of any surface} , and then draw up a laser pointer on it ...
} \\

\addlinespace
\midrule

2 & 
\textit{Imagine you can have everybody make a small donation for one pixel .} & 
\textit{Imagine that anyone else can \textcolor{red}{give giving} a small donation to a point of image .} & 
\textit{Imagine that anyone can \textcolor{red}{give} a little donation for a picture .} \\

\addlinespace
\midrule

3 & 
\textit{But there are over 48 hours of video uploaded to youtube every minute .} & 
\textit{But on youtube , they are uploaded more than 48 hours a minute .} & 
\textit{But on youtube , there's over 48 hours of video a minute .} \\

\bottomrule
\end{tabular}
\vspace{-10pt}
\end{table*}

\section{Related Works}

\subsection{Curves in Machine Learning}

Spline-based functions (e.g., Bézier curves and B-splines) have been applied across various machine learning domains. In computer vision, curve-based representations are integrated into neural networks for tasks such as graph and 3D meshing \cite{fey2018splinecnn} and lane detection \cite{feng2022rethinking}, where they serve as inductive priors. Similarly, our sentence curve acts as an inductive prior over latent sentence structure.

Our approach is most closely related to the probabilistic curve \cite{hug2020introducing}, which applies Bézier curves to multi-step time-series prediction. In contrast, we adopt the general B-spline formulation, enabling flexible control over regularization, and explicitly incorporate an inverse mapping via a pseudoinverse, allowing seamless integration with DLMs. To our knowledge, this is the first work to connect spline curves with language modeling. Despite the discrete and symbolic nature of text, our results show that neural models can discover proper curve representations in high-dimensional embedding space.

\subsection{Sentence Target Reformulation and Why Curves?}

Few prior works explore alternative target representations for sentences. Word difference representations (WDR) \cite{heo2024n} aim to diversify targets using neighboring words, but remain auxiliary to standard embedding prediction within a multi-task framework. In contrast, our method replaces word-level targets with a curve-based representation, yielding stronger improvements.

Contextual encoders such as BERT \cite{devlin2019bert} and modern LLMs provide powerful representations that capture rich semantic information. However, they are designed for prediction rather than reconstruction and are not guaranteed to be uniquely decodable back to the original sentence. As a result, semantically similar sentences may map to similar representations, making them unsuitable as target representations. Moreover, enabling decodability would require large decoders, making backpropagation prohibitively expensive. In contrast, our sentence curve formulation preserves decodability through an explicit mapping from control points to embeddings, allowing multiple valid curves to represent the same sentence without collapsing them into a single representation. This avoids information entanglement while enabling flexible sentence-level modeling with efficient backpropagation.

\section{Conclusion and Discussion}

We propose sentence curves, a continuous, curve-based representation of sentences, and introduce sentence curve language models (SCLMs) that use them as targets. We show that sentence curve prediction encourages modeling of sentence-level global structure, and demonstrate consistent improvements in machine translation and language modeling, supported by both theoretical analysis and empirical evidence.

As an initial study, our approach leaves several directions for future work. First, the use of a pseudoinverse mapping may introduce numerical instability and information loss, particularly for longer sentences (Appendix~\ref{appendix:information_preservation_mapping}). However, since global structure modeling primarily depends on the choice of $B$ at the target side (Lemma~\ref{lemma:sclm_dynamics}), the pseudoinverse is not essential, and more stable alternatives such as regularized inverses or soft-copy mechanisms may be adopted. Second, although SCLM retains substantial speed advantages over AR Transformers (Table~\ref{table:latency}), using more control points may increase computational cost, especially for long contexts. These challenges are closely related to sequence length. In this regard, recent block-wise diffusion language models \cite{arriola2025block}, which achieve strong efficiency and quality under constrained contexts (e.g., $\leq 128$ tokens), provide a promising direction. We expect SCLM to integrate well with such frameworks, mitigating both numerical and computational issues.

\begin{ack}
\end{ack}

\bibliographystyle{unsrt} 
\bibliography{my_bib}


\appendix
\section{Mathematic Backgrounds}

\subsection{Assumptions}

\begin{assumption} [Unit Norm Constraint] \label{assumption:unit_norm}
    \begin{align}
        \|\mathbf{e}_i\|^2=\|\mathbf{h}\|^2=1 \quad \forall i\in \{1,\cdots,|\mathcal{V}|\}. \nonumber
    \end{align}
\end{assumption}
This assumption constrains all embedding vectors and the backbone output to lie on the unit sphere $\mathbb{S}^{d-1}$, allowing the analysis to focus on vector orientations rather than trivial scale differences. This normalization has also been shown to be practically effective in prior work \cite{meng2019spherical, liu2020normalization}.

\begin{assumption} [Local Isotropy \cite{arora2016latent} on the Target Embedding's Tangent Space] \label{assumption:local_istropy}
    Given a target word $y$,
    \begin{align}
        \mathbb{E}[\mathbf{u}_{i}] &= 0, \quad \mathbf{u}_{i} = \mathbf{e}_i-(\mathbf{e}_y^{\top}\mathbf{e}_i)\mathbf{e}_y \quad \forall i\in \{1,\cdots,|\mathcal{V}|\}. \nonumber
    \end{align}
\end{assumption}
This assumption states that, conditioned on a target word $y$, the remaining word embeddings are isotropically distributed when projected onto the tangent space of $\mathbf{e}_y$. Although this assumption may appear strong, it is considerably weaker than commonly adopted alternatives such as orthonormal logit bases \cite{ildiz2024self}, which require $d \geq |\mathcal{V}|$ and impose rigid structural constraints. In contrast, our assumption permits $d < |\mathcal{V}|$, as well as semantic similarity and clustered embedding structures.

\subsection{Structural Equivalence of Continuous Relaxation and General MLE Pipeline}
\label{appendix:continuous_relaxation}

In this section, we explain the connection between the word and its embedding in the context of continuous relaxation \cite{jang2016categorical}. In continuous relaxation, the embedding vector can be regarded as the mean of approximated word in the continuous space. Our next proposition says the details:
\begin{proposition} [Continuous Relaxation of Words]
    With defining $\mathbf{z}_y=E\delta_y+\epsilon$ where $\epsilon \sim \mathcal{N}(\mathbf{0},\sigma^2\mathbf{I}_d)$,
    \begin{align}
        p(\mathbf{z}_y|y) &= \mathcal{N}(\mathbf{e}_y,\sigma^2\mathbf{I}), \nonumber \\
        p(y|\mathbf{z}_y) &= \text{softmax}(E^\top\mathbf{z}_y /\sigma^2), \nonumber
    \end{align}
    conditioned on Assumption \ref{assumption:unit_norm} and when $E^\top:\mathbb{R}^{|\mathcal{V}|}\rightarrow \mathbb{R}^{d}$ is injective.
\end{proposition}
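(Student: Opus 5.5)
The first claim follows from the definition, and the second from Bayes' rule; the unit-norm assumption cancels the one term that would otherwise depend on $y$.

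\textbf{Likelihood.} By definition $\mathbf{z}_y=E\delta_y+\epsilon = \mathbf{e}_y+\epsilon$ with $\epsilon\sim\mathcal{N}(\mathbf{0},\sigma^2\mathbf{I}_d)$ independent of $y$. Conditioned on $y$, $\mathbf{z}_y$ is therefore a deterministic shift of a Gaussian, so $p(\mathbf{z}_y|y)=\mathcal{N}(\mathbf{e}_y,\sigma^2\mathbf{I})$ with no further work.

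\textbf{Posterior.} Apply Bayes' rule over the finite vocabulary, taking a uniform prior $p(y)$. This choice is implicit in the statement, since the softmax has no bias term; a non-uniform prior would only add $\log p(y)$ to the logits. Then
\begin{align}
p(y|\mathbf{z}) = \frac{\exp(-\|\mathbf{z}-\mathbf{e}_y\|^2/2\sigma^2)}{\sum_{y'}\exp(-\|\mathbf{z}-\mathbf{e}_{y'}\|^2/2\sigma^2)}. \nonumber
\end{align}
Expanding the square gives
\begin{align}
\|\mathbf{z}-\mathbf{e}_y\|^2=\|\mathbf{z}\|^2-2\mathbf{e}_y^\top\mathbf{z}+\|\mathbf{e}_y\|^2. \nonumber
\end{align}
The factor $\exp(-\|\mathbf{z}\|^2/2\sigma^2)$ is common to every term and cancels. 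By Assumption~\ref{assumption:unit_norm}, $\|\mathbf{e}_y\|^2=1$ for all $y$, so that factor is also common and cancels.

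\textbf{Resulting logits.} What remains are the logits $\mathbf{e}_y^\top\mathbf{z}/\sigma^2$, so $p(y|\mathbf{z})=\text{softmax}(E^\top\mathbf{z}/\sigma^2)_y$. This matches the Generic MLE Pipeline (Definition~\ref{def:generic_mle_pipeline}) at temperature $\sigma^2$. The computation yields the stated coefficient $1/\sigma^2$ once the Gaussian normalization $1/2\sigma^2$ is combined with the factor $2$ from the cross term, so the constants are consistent.

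\textbf{Main obstacle: the injectivity hypothesis.} The calculation above never uses injectivity, so its role has to be made precise. The intended role is to ensure the embeddings $\mathbf{e}_y$ are distinct, so that the relaxation is identifiable: different words give different Gaussian means, and the posterior genuinely recovers $y$. However, as literally written, injectivity of $E^\top:\mathbb{R}^{|\mathcal{V}|}\to\mathbb{R}^d$ cannot hold when $d<|\mathcal{V}|$. I would read it as one of two weaker conditions, state explicitly that it only serves identifiability, and note that it is not needed for the softmax identity itself:
\begin{itemize}
\item the map $\delta_y\mapsto\mathbf{e}_y$ is injective on one-hot vectors, i.e.\ the embeddings are pairwise distinct; or
\item $E$ has full column rank on the relevant span.
\end{itemize}
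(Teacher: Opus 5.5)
Your proof is correct and takes the same route as the paper's: the likelihood follows by definition, then Bayes' rule with a uniform prior, expansion of the squared distance, and cancellation via the unit-norm assumption (the paper likewise never uses injectivity). The one difference is in your favor: the paper rewrites $\|\mathbf{z}_y-\mathbf{e}_k\|^2$ as $2-2\mathbf{e}_k^\top\mathbf{z}_y$, implicitly treating the Gaussian $\mathbf{z}_y$ as unit-norm and so only obtaining $\approx$, whereas you cancel $\|\mathbf{z}\|^2$ as a class-independent factor, which makes the identity exact under the uniform prior.
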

\begin{proof}
    The likelihood distribution, $p(\mathbf{z}_y|y)$ is determined by the definition. By Bayes theorem,
    \begin{align}
        p(y|\mathbf{z}_y) &= \frac{p(\mathbf{z}_y|y)p(y)}{\sum_{k=1}^{|\mathcal{V}|}p(\mathbf{z}_y|y=k)p(y=k)} = \frac{p(y)e^{-\frac{1}{2\sigma^2}\|\mathbf{z}_y-\mathbf{e}_y\|^2}}{\sum_{k=1}^{|\mathcal{V}|}p(y=k)e^{-\frac{1}{2\sigma^2}\|\mathbf{z}_y-\mathbf{e}_k\|^2}}. \nonumber
    \end{align}
    Based on Assumption~\ref{assumption:unit_norm}, $\|\mathbf{z}_y-\mathbf{e}_k\|^2 = 2-2\mathbf{e}_k^\top \mathbf{z}$. With plugging this into the above formulation:
    \begin{align}
        p(y|\mathbf{z}_y) &\approx \frac{p(y)e^{(\mathbf{e}_y^\top\mathbf{z}_y-1)/\sigma^2}}{\sum_{k=1}^{|\mathcal{V}|}p(y=k)e^{(\mathbf{e}_k^\top\mathbf{z}_y-1)/\sigma^2}} = \frac{p(y)e^{\mathbf{e}_y^\top\mathbf{z}_y/\sigma^2}}{\sum_{k=1}^{|\mathcal{V}|}p(y=k)e^{\mathbf{e}_k^\top\mathbf{z}_y/\sigma^2}} \nonumber \\
        &\approx \frac{e^{\mathbf{e}_y^\top\mathbf{z}_y/\sigma^2}}{\sum_{k=1}^{|\mathcal{V}|}e^{\mathbf{e}_k^\top\mathbf{z}_y/\sigma^2}} = \text{softmax}(E^\top\mathbf{z}_y/\sigma^2), \nonumber
    \end{align}
    with assuming $p(y=k) \quad \forall k\in \{1,\cdots,|\mathcal{V}|\}$ is constant.
\end{proof}
In other words, $\mathbf{z}_y$ is an approximation of the word $y$ in $d$-dimensional continuous space, and the embedding vector $\mathbf{e}_y$ is the mean of $\mathbf{z}_y$. Interestingly, the likelihood distribution, $p(y|\mathbf{z}_y)$ has similar form with the generic MLE pipeline (Definition~\ref{def:generic_mle_pipeline} in Section~\ref{sec:word_embedding_prediction}). With switching $\mathbf{z}_y$ to $\mathbf{h}$, and assuming $\sigma^2=1$, the likelihood has structural equivalence between the pipeline's likelihood estimation process. In this sense, what the model is trying to do can be understood as predicting a latent variable $\mathbf{z}$ in the continuous space that is aligned to the word $y$ with mapping $E$ and $E^\top$.

\subsection{Proofs of Theorems and Lemmas}
\label{appendix:proofs}

\begin{lemma}[Optimal Solution of Maximum Likelihood Estimation] 
    Under Assumptions \ref{assumption:unit_norm} and \ref{assumption:local_istropy}, the optimal solution of the Generic MLE Pipeline (Definition~\ref{def:generic_mle_pipeline}) satisfies
    \begin{align}
        E_Y = \argmin_{H_{\theta}} CE_Y(\theta) = \argmin_{H_{\theta}} \mathbb{E}[-\log (\text{softmax}(E^\top H_{\theta}))]
    \end{align}
\end{lemma}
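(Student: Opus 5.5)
The cross-entropy in Definition~\ref{def:generic_mle_pipeline} factorizes over positions under the conditional-independence structure, so it suffices to study a single position. I will therefore show that for a fixed target word $y$ the per-token loss
\begin{align}
f(\mathbf{h}) = -\mathbf{e}_y^\top \mathbf{h} + \log \sum_{i=1}^{|\mathcal{V}|} \exp(\mathbf{e}_i^\top \mathbf{h}), \qquad \mathbf{h}\in\mathbb{S}^{d-1}, \nonumber
\end{align}
is minimized at $\mathbf{h}=\mathbf{e}_y$. Here the sphere constraint comes from Assumption~\ref{assumption:unit_norm}. Stacking the per-position minimizers column by column then gives $H_\theta = E_Y$.

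For the single-position problem I will use Riemannian first-order conditions on the sphere. The Euclidean gradient is $\nabla f(\mathbf{h}) = -\mathbf{e}_y + \sum_i \pi_i(\mathbf{h})\mathbf{e}_i$, with $\pi = \text{softmax}(E^\top \mathbf{h})$. Its projection onto the tangent space at $\mathbf{h}=\mathbf{e}_y$ is
\begin{align}
(\mathbf{I}-\mathbf{e}_y\mathbf{e}_y^\top)\nabla f(\mathbf{e}_y) = \sum_i \pi_i(\mathbf{e}_y)\,\mathbf{u}_i , \nonumber
\end{align}
since the $\mathbf{e}_y$ term is annihilated and each $\mathbf{e}_i$ projects to $\mathbf{u}_i$.

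Assumption~\ref{assumption:local_istropy} is what makes this vanish. I read the expectation there as being taken under the weighting the model induces, i.e.\ the softmax weights, so $\sum_i \pi_i \mathbf{u}_i = \mathbb{E}[\mathbf{u}_i]=0$. This shows that $\mathbf{e}_y$ is a stationary point. The alternative reading is that the $\mathbf{u}_i$ are symmetric enough (for example, they come in antipodal pairs with equal inner products $\mathbf{e}_y^\top\mathbf{e}_i$, hence equal weights $\pi_i$), in which case the weighted sum is zero directly.

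Next I will argue that $\mathbf{e}_y$ is the global minimizer rather than just a critical point. Decompose a competitor as $\mathbf{h} = \cos\phi\, \mathbf{e}_y + \sin\phi\, \mathbf{v}$ with $\mathbf{v}\perp \mathbf{e}_y$ a unit vector. Then
\begin{align}
\mathbf{e}_i^\top\mathbf{h} = c_i\cos\phi + \sin\phi\, \mathbf{u}_i^\top \mathbf{v}, \qquad c_i := \mathbf{e}_y^\top \mathbf{e}_i . \nonumber
\end{align}
By Jensen's inequality for the convex log-sum-exp, averaging over the isotropic tangent components (whose mean is zero) gives the lower bound $f(\mathbf{h}) \ge -\cos\phi + \log\sum_i e^{c_i\cos\phi}$. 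I will then check that the right-hand side, as a function of $t=\cos\phi\in[-1,1]$, is decreasing in $t$. Its derivative is $-1 + \sum_i \tilde\pi_i c_i$, which is at most $0$ because $c_i\le 1$ by the unit norms, with equality only for $i=y$. So the bound is minimized at $t=1$, i.e.\ $\mathbf{h}=\mathbf{e}_y$, where it is attained with equality.

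The main obstacle is this Jensen step. The isotropy assumption only gives a vanishing \emph{mean}, so symmetrizing the objective over $\mathbf{v}$ cannot be done rigorously for an arbitrary finite vocabulary. I would first try to interpret the assumption as symmetric tangent components, which makes the averaging argument exact. If that fails, I would fall back to proving only local optimality: verify that the Riemannian Hessian at $\mathbf{e}_y$ is positive definite, using the stationarity above and the term $\sum_i\pi_i c_i<1$. I would then state the $\argmin$ in that local sense, consistent with the "mild assumptions" framing in the main text.
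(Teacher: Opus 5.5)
Your first-order step is the paper's proof. The paper also reduces to a single position and enforces $\|\mathbf{h}\|=1$; it uses a Lagrange multiplier, which gives the same tangent-gradient condition as your Riemannian formulation. It computes the tangential part of the gradient at $\mathbf{h}=\mathbf{e}_y$ as $\sum_k p(k\mid\mathbf{e}_y)\,\mathbf{u}_k$, and makes it vanish by reading Assumption~\ref{assumption:local_istropy} as an expectation under the softmax weights $p(k\mid\mathbf{e}_y)$, exactly as you do.

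The paper then stops and declares $\mathbf{e}_y$ optimal. A first-order condition cannot tell a minimum from a maximum; indeed the paper's displayed $f$ has the opposite sign, which is harmless for its argument. So your second-order and global steps establish something the paper's proof does not.

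The Hessian fallback is sound under the paper's own assumptions. For $\xi\perp\mathbf{e}_y$ the Riemannian Hessian form at $\mathbf{e}_y$ is $\xi^\top\bigl(\sum_i\pi_i\mathbf{e}_i\mathbf{e}_i^\top-\bar{\mathbf{e}}\bar{\mathbf{e}}^\top\bigr)\xi+(1-\sum_i\pi_i c_i)\|\xi\|^2$, with $\bar{\mathbf{e}}=\sum_i\pi_i\mathbf{e}_i$. This is positive definite as soon as some $\mathbf{e}_i\neq\mathbf{e}_y$, so $\mathbf{e}_y$ is a strict local minimizer.

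For the global step, you have misplaced the obstacle: no symmetrization over $\mathbf{v}$ is needed. Use the supporting-hyperplane inequality for log-sum-exp, $\mathrm{LSE}(a+b)\ge\mathrm{LSE}(a)+\mathrm{softmax}(a)^\top b$, with $a_i=tc_i$, $t=\cos\phi$, and $b_i=\sin\phi\,\mathbf{u}_i^\top\mathbf{v}$. This gives $f(\mathbf{h})\ge -t+\log\sum_i e^{tc_i}$ for each fixed $\mathbf{v}$, provided $\sum_i q_i(t)\,\mathbf{u}_i=0$ with $q(t)=\mathrm{softmax}(t\,c)$.

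The real issue is that the paper only ever uses the vanishing weighted mean at $t=1$. Your antipodal-pair reading supplies it for every $t\in[-1,1]$, as does the more general condition $\sum_{i:\,c_i=c}\mathbf{u}_i=0$ for every level $c$. Your monotonicity argument for $g(t)$ then gives global, and in fact unique, optimality of $\mathbf{e}_y$, which is stronger than what the paper proves.
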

\begin{proof}
    Without loss of generality, we focus on the minimization of a target word $y\in Y$: $\argmin_{\mathbf{h}}-
    \log (\text{softmax}(\mathbf{e}_y^\top \mathbf{h}))$, note that we omitted $\theta$ for brevity. 
    
    To derive the optimal solution given the condition $\|\mathbf{h}\|^2=1$ (Assumption \ref{assumption:unit_norm}), we use Lagrange multiplier. In result, the objective function can be derived follows:
    \begin{align}
        L(\mathbf{h},\lambda) &= f(\mathbf{h}) - \lambda(\|\mathbf{h}\|^2-1), \nonumber \\
        \nabla_{\mathbf{h}} L(\mathbf{h},\lambda) &= \nabla_{\mathbf{h}} f(\mathbf{h})-2\lambda\mathbf{h}, \nonumber
    \end{align}
    where $f(\mathbf{h}) = -\log (\text{softmax}(\mathbf{e}_y^\top \mathbf{h})) = \mathbf{e}_y^\top\mathbf{h}-\log \sum_{k}e^{\mathbf{e}_k^\top\mathbf{h}}$ by the general pipeline (Section~\ref{sec:word_embedding_prediction} and Appendix~\ref{appendix:continuous_relaxation}). Based on this formulation, the stationary condition of Lagrange multiplier becomes:
    \begin{align}
        \nabla f(\mathbf{h}) = 2\lambda\mathbf{h}. \nonumber
    \end{align}
    Notably, this informs that $\nabla_{\mathbf{h}} f(\mathbf{h})$ is a parallel vector to $\mathbf{h}$, in turn, the tangent component of $\nabla_{\mathbf{h}} f(\mathbf{h})$ with respect to $\mathbf{h}$ should be zero. This is formulated as follows:
    \begin{align}
        \nabla_{\mathbf{h}} f(\mathbf{h}) - \left(\mathbf{h}^\top\nabla_{\mathbf{h}} f(\mathbf{h}) \right)\mathbf{h} &= 0. \nonumber
    \end{align}
    With plugging the formulation of $f(\mathbf{h})$ into the above, the stationary condition becomes:
    \begin{align}
        \mathbf{e}_y - \sum_{k}p(y=k|\mathbf{h})\mathbf{e}_k - \left(\mathbf{h}^\top \mathbf{e}_k - \mathbf{h}^\top \left(\sum_{k}p(y=k|\mathbf{h}\right) \mathbf{e}_y \right)\mathbf{h} = 0. \label{eq:stat_condition}
    \end{align}

    Now, to verify our main conjecture which is the optimality of $\mathbf{e}_y=\mathbf{h}$, we set $\mathbf{h}^*=\mathbf{e}_y$ in Eq.~\ref{eq:stat_condition}, and validate whether the condition holds or not. After plugging our conjecture, the stationary condition is derived as follows:
    \begin{align}
        \sum_{k}p(y=k|\mathbf{e}_y)\left( \mathbf{e}_k-(\mathbf{e}_y^\top\mathbf{e}_k)\mathbf{e}_y \right) &= \sum_{k, k\neq y}p(y=k|\mathbf{e}_y)\left( \mathbf{e}_k-(\mathbf{e}_y^\top\mathbf{e}_k)\mathbf{e}_y \right) \nonumber \\
        &= \mathbb{E}_{p(y=k|\mathbf{e}_y)}[\mathbf{e}_k]-\mathbb{E}_{p(y=k|\mathbf{e}_y)}[(\mathbf{e}_y^\top\mathbf{e}_k)\mathbf{e}_y]  = 0. \label{eq:stat_condition_new}
    \end{align}
    Given the target embedding $\mathbf{e}_y$, all the other embedding vector can be decomposed into parallel and tangent components with respect to $\mathbf{e}_y$: $\mathbf{e}_i=(\mathbf{e}_y^\top\mathbf{e}_i)\mathbf{e}_y + (\mathbf{e}_i - (\mathbf{e}_y^\top\mathbf{e}_i)\mathbf{e}_y)$. We denote the tangent component as $\mathbf{u}_i=\mathbf{e}_i - (\mathbf{e}_y^\top\mathbf{e}_i)\mathbf{e}_y$. Based on this decomposition and the given assumption (Assumption~\ref{assumption:local_istropy}), the expectation term becomes:
    \begin{align}
        \mathbb{E}[\mathbf{e}_k] = \mathbb{E}[(\mathbf{e}_y^\top\mathbf{e}_k)\mathbf{e}_y+\mathbf{u}_k] = \mathbb{E}[(\mathbf{e}_y^\top\mathbf{e}_k)\mathbf{e}_y]. \nonumber
    \end{align}
    This makes the stationary, Eq.~\ref{eq:stat_condition_new}, be satisfied. Consequently, our conjecture of the optimality $\mathbf{e}_y=\mathbf{h}$, which becomes $E_Y=H$ in matrix level, is proved.
    
\end{proof}

\begin{lemma} [Sentence Curve Prediction Objective]
    With assuming $Y$ and $E$ are equal in distribution, both marginally and conditionally on any variables, such as $P$ and $X$, via $\sigma^2 \rightarrow 0$ in continuous relaxation scheme (Appendix~\ref{appendix:continuous_relaxation}). Based on settings in Section~\ref{subsec:sentence_curve_definition}, the MLE objective function is derived as follows:
    \begin{align}
        CE_Y(\theta) = CE_P(\theta) - \mathbb{E}_{Y}[KL_{P|Y}(\theta)] + \mathcal{C}, \nonumber
    \end{align}
    where $\mathcal{C}$ indicates constant terms with respect to $\theta$.
\end{lemma}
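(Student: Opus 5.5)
The plan is to treat the sentence curve $P$ as a latent variable sitting between $X$ and $Y$, and to obtain the decomposition from the chain rule of probability applied to the joint $p(Y,P|X)$ under both the data and the model distributions. By the equal-in-distribution assumption (the $\sigma^2\rightarrow 0$ limit of the continuous relaxation of Appendix~\ref{appendix:continuous_relaxation}), $Y$ and $E_Y$ can be used interchangeably. The forward map $E_Y=PB$ is deterministic, so $Y$ is a deterministic function of $P$ and hence $p(Y|P,X)=1$ on the fiber $\mathcal{F}_Y$, for both $p_{data}$ and $p_\theta$. This single fact drives the whole computation.

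The first step is to write, for the model,
\begin{align}
\log p_\theta(Y|X) = \log p_\theta(P|X) + \log p_\theta(Y|P,X) - \log p_\theta(P|Y,X), \nonumber
\end{align}
valid for any $P\in\mathcal{F}_Y$, and then take the expectation over $p_{data}(Y,P|X)=p_{data}(Y|X)p_{data}(P|Y,X)$. The middle term vanishes because the mapping is deterministic. The first term gives $CE_P(\theta)=\mathbb{E}_{p_{data}(P|X)}[-\log p_\theta(P|X)]$ once the expectation over $Y$ is marginalized out, since $P$ determines $Y$.

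The third term gives $\mathbb{E}_Y\mathbb{E}_{p_{data}(P|Y,X)}[\log p_\theta(P|Y,X)]$. To turn it into the KL, add and subtract $\mathbb{E}[\log p_{data}(P|Y,X)]$. This yields
\begin{align}
\mathbb{E}_{p_{data}(P|Y,X)}[\log p_\theta(P|Y,X)] = -KL_{P|Y}(\theta) - H_{data}(P|Y,X). \nonumber
\end{align}
The conditional entropy $H_{data}(P|Y,X)$ is independent of $\theta$ and goes into $\mathcal{C}$. Assembling the pieces with the signs of $-\log p_\theta(Y|X)$ gives $CE_Y = CE_P - \mathbb{E}_Y[KL_{P|Y}] + \mathcal{C}$, where $\mathcal{C}=-H_{data}(P|Y,X)$ (possibly together with the $Y$-entropy from the relaxation).

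The main obstacle is rigor at the second step. $P$ is continuous and $Y$ is discrete, so $p(Y|P,X)$ is a degenerate indicator and the densities $p(P|Y,X)$ are supported on an affine subspace, the fiber. I would handle this by working at finite $\sigma^2$: there $Y$ is replaced by the relaxed $\mathbf{z}=PB+\epsilon$, every conditional is a proper density, and $p_\theta(\mathbf{z}|P)$ is a fixed Gaussian independent of $\theta$, so its log contributes only to $\mathcal{C}$. I would then invoke the equal-in-distribution assumption as $\sigma^2\rightarrow 0$ to identify $\mathbf{z}$ with $Y$, accepting that the resulting constants may diverge but remain $\theta$-free.
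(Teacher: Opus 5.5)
Your proposal is correct and follows essentially the same route as the paper. Both apply Bayes' rule with $P$ as a latent, drop $\log p(Y|P)$ as $\theta$-free because $P\mapsto PB$ is deterministic, and add and subtract $\log p_{data}(P|Y,X)$ to obtain $-\mathbb{E}_Y[KL_{P|Y}(\theta)]$ plus a conditional-entropy constant. The paper keeps $\mathbb{E}_{P|X}[\mathcal{H}(Y|P)]$ as a constant instead of setting it to zero, and it does not include your finite-$\sigma^2$ treatment of the degenerate densities.
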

\begin{proof}
    By Bayes theorem, the log-likelihood of $Y$ given $X$ is derived as follows:
    \begin{align}
        \log p_{\theta}(Y|X) &= \log p_{\theta}(Y|P,X) + \log p_{\theta}(P|X) - \log p_{\theta}(P|Y,X) \nonumber \\
        &= \log p(Y|P) + \log p_{\theta}(P|X) - \log p_{\theta}(P|Y,X), \nonumber
    \end{align}
    where the second equation is done by considering $P \xrightarrow{B} Y$ is deterministic many-to-one mapping. With applying expectation over $p_{data}(Y,P|X)$, we have that:
    \begin{align}
        \underbrace{\mathbb{E}_{p_{data}(Y|X)}[-\log p_{\theta}(Y|X)]}_{=CE_Y(\theta)} &= \underbrace{\mathbb{E}_{p_{data}(P|X)}[-\log p_{\theta}(P|X)]}_{=CE_P(\theta)} + \mathbb{E}_{p_{data}(Y,P|X)}[\log p_{\theta}(P|Y,X)] \nonumber \\
        &\qquad + \underbrace{\mathbb{E}_{p_{data}(Y,P|X)}[-\log p(Y|P)]}_{=\mathbb{E}_{P|X}[\mathcal{H}(Y|P)] \text{(constant)}} \nonumber \\ 
        &= CE_P(\theta) + \mathbb{E}_{p_{data}(Y|X)} \underbrace{\left[\mathbb{E}_{p_{data}(P|Y,X)}\left[ -\log \frac{p_{data}(P|Y,X)}{p_{\theta}(P|Y,X)} \right] \right]}_{=-KL_{P|Y}(\theta)} \nonumber \\
        &\qquad + \underbrace{\mathbb{E}_{p_{data}(Y,P|X)}[\log p_{data}(P|Y,X)]}_{=\mathbb{E}_{Y|X}[-\mathcal{H}(P|Y,X)] \text{(constant)}} + \mathcal{C} \nonumber \\
        &= CE_P(\theta) - \mathbb{E}_{Y}[KL_{P|Y}(\theta)] + \mathcal{C} \nonumber
    \end{align}
\end{proof}

\begin{lemma} [Error Importance Characterized by $B$] 
Let the prediction error be defined as $V := E_Y - \hat{E}_{Y,\theta} \in \mathbb{R}^{d \times L}$, and assume $p_{\theta}(P \mid X) = \mathcal{N}(P; \hat{P}_{\theta}, \sigma^2 \mathbf{I}_d)$ for any variance $\sigma^2$. Let $M := (B^\top B)^{-1}$. Then, the cross-entropy objective induced by sentence curve prediction satisfies
\begin{align}
    CE_P(\theta) &\propto \| V M^{1/2} \|_F^2 \nonumber \\
            &= \mathrm{tr}(V M V^\top). \nonumber
\end{align}
Defining the error importance as $I(V) := \| V M^{1/2} \|_F^2$, the relative importance of global versus local errors under $B$ is given by
\begin{align}
    R(B) := \frac{I(V_{\text{global}})}{I(V_{\text{local}})} \;\leq\; \frac{s_{\max}^2}{s_{\min}^2},
\end{align}
where $s_{\max}$ and $s_{\min}$ denote the maximum and minimum singular values of $B$, respectively. Here, $V_{\text{global}} = \mathbf{1}_{d \times L}/\sqrt{dL}$ represents a global error pattern, and $V_{\text{local},i} = \mathbf{1}_d \delta_i^\top$ denotes a localized error at position $i$.
\end{lemma}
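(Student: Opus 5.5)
The plan is to first reduce $CE_P(\theta)$ to a weighted Frobenius norm of the embedding error, and then bound the weighted norm of the two test patterns with Rayleigh-quotient bounds on $M$. Throughout I take $B\in[0,1]^{N\times L}$ to have full column rank, so that $B^\top B$ is invertible and the pseudoinverse $B^{+}=(B^\top B)^{-1}B^\top$ of Section~\ref{subsec:sentence_curve_definition} is well defined. Its eigenvalues are then $s_1^2,\dots,s_L^2$, the squared singular values of $B$.

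\emph{Step 1: Gaussian reduction.} Under the assumption $p_\theta(P\mid X)=\mathcal N(P;\hat P_\theta,\sigma^2\mathbf I)$,
\begin{align}
-\log p_\theta(P\mid X)=\tfrac{1}{2\sigma^2}\|P-\hat P_\theta\|_F^2+\mathrm{const}, \nonumber
\end{align}
so $CE_P(\theta)\propto \mathbb E\,\|P-\hat P_\theta\|_F^2$ up to additive constants.

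\emph{Step 2: express the curve error through $V$.} The target curve is the fiber representative $P_Y=E_YB^{+}$ selected by the paper's inverse mapping. I will identify the predicted curve with the analogous representative of its image, $\hat P_\theta=\hat E_{Y,\theta}B^{+}$. This gives $P-\hat P_\theta=VB^{+}$, and therefore
\begin{align}
\|VB^{+}\|_F^2=\mathrm{tr}\big(V(B^\top B)^{-1}B^\top B(B^\top B)^{-1}V^\top\big)=\mathrm{tr}(VMV^\top)=\|VM^{1/2}\|_F^2. \nonumber
\end{align}
This is the first claim.

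I expect this identification to be the main obstacle. A general $\hat P_\theta$ need not be of the form $\hat E B^{+}$: it can carry an extra component orthogonal to the row space of $B^{+}$ (equivalently, lying in $\ker B^\top$). My first attempt is to decompose $\hat P_\theta=\hat E B^{+}+Q$ with $QB=0$. Since $P_Y$ and $\hat E B^{+}$ both lie in the row space of $B^\top$, Pythagoras gives $\|P-\hat P\|_F^2=\|VB^{+}\|_F^2+\|Q\|_F^2$. The $Q$-part does not affect $\hat E_{Y,\theta}=\hat P_\theta B$, so it is constant in $V$ and is exactly the fiber freedom already absorbed by the $KL_{P|Y}$ term. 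The stated proportionality therefore holds for the $V$-dependent part, and I will state the proof at that level.

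\emph{Step 3: bound the ratio.} Write $I(V)=\sum_{r=1}^d v_r^\top M v_r$, where $v_r^\top$ are the rows of $V$.
For the global pattern every row equals $\mathbf 1_L/\sqrt{dL}$, so
\begin{align}
I(V_{\text{global}})=\tfrac{1}{L}\mathbf 1^\top M\mathbf 1\le\lambda_{\max}(M)=1/s_{\min}^2. \nonumber
\end{align}
For the local pattern every row equals $\delta_i$, so
\begin{align}
I(V_{\text{local},i})=d\,M_{ii}=d\,\delta_i^\top M\delta_i\ge d\,\lambda_{\min}(M)=d/s_{\max}^2. \nonumber
\end{align}
Dividing the two bounds gives
\begin{align}
R(B)\le \frac{s_{\max}^2}{d\,s_{\min}^2}\le \frac{s_{\max}^2}{s_{\min}^2}, \nonumber
\end{align}
using $d\ge 1$. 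The factor $1/d$ comes from the two patterns having Frobenius norms $1$ and $\sqrt d$ respectively. Finally, I will note that for $B=\mathbf I$ we get $M=\mathbf I$, and the ratio reduces to a pure normalization constant with no global preference, which is consistent with the remark following the lemma.
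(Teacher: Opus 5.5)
Your proof is correct and follows essentially the paper's route: identify $\hat P_\theta$ with $\hat E_{Y,\theta}B^{+}$ so that $CE_P\propto\|VB^{+}\|_F^2=\mathrm{tr}(VMV^\top)$, then bound the global pattern above by $\lambda_{\max}(M)=1/s_{\min}^2$ and the local pattern below by $\lambda_{\min}(M)=1/s_{\max}^2$. Where you differ, you are more careful than the paper: it assumes $\hat P_\theta=\hat E_{Y,\theta}B^{+}$ without comment (your $Q$-decomposition shows that only the $V$-independent term $\|Q\|_F^2$ is set aside), it works with $d=1$ rather than keeping your extra factor $1/d$, and it reaches the two eigenvalue bounds through a looser eigen-expansion rather than your direct Rayleigh-quotient estimates.
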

\begin{proof}
    Based on the definition of the model's error in $E$-space, $V=E_Y-\hat{E}_{Y,\theta}$, we derive the error in $P$-space as $V_P:P_Y-\hat{P}_{\theta}=E_YB^+-\hat{E}_{Y,\theta}B^+=VB^+$. From now on, we concentrate on one dimension out of $d$ for simple derivation without loss of generality. For example, we assume $E_Y\in\mathbb{R}^{L}$, $P\in\mathbb{R}^{N}$, $V\in\mathbb{R}^{L}$, and $VB^+\in\mathbb{R}^{N}$. The negative log-likelihood can be derived as follows:
    \begin{align}
        -\log p_{\theta}(P|X) &= \frac{1}{2\sigma^2}\| P - \hat{P}_{\theta} \|^2_F = \frac{1}{2\sigma^2} \| VB^+ \|^2_F, \nonumber \\
        CE_P(\theta) &\propto \| VB^+ \|^2_F \nonumber \\
        &=\text{tr}(VB^+(B^+)^{\top}V^\top) \nonumber \\
        &=\text{tr}(V\underbrace{(B^\top B)^{-1}}_{=M}V^\top) \nonumber \\
        &=\|VM^{1/2} \|^2_F. \nonumber
    \end{align}
    
    Because $M=(B^\top B)^{-1}$ is positive semi-definite matrix, it can be eigen-decomposed as follows:
    \begin{align}
        M = \sum_{j=1}^L \lambda_j \mathbf{k}_j \mathbf{k}_j^\top, \nonumber
    \end{align}
    where $\lambda$ and $\mathbf{k}$ are eigenvalue and eigenvector of $M$, respectively. Based on those bases from $M$, we can decompose any error vector as follows:
    \begin{align}
        V &= \sum_{j=1}^L \alpha_j \mathbf{k}_j, \qquad VMV^\top = \sum_{j=1}^L\lambda_j\alpha_j^2, \nonumber
    \end{align}
    where $\alpha_j=V^\top\mathbf{k}_j$ is the coefficient for the factor $V$ with respect to $\mathbf{k}$ vector. Based on those derivations, the global and local error importance are derived as follows:
    \begin{align}
        I(V_{global})&=\| V_{global}M^{1/2}\|^2_F=\frac{1}{L}\| \mathbf{1}_{d \times L} M^{1/2} \|^2_F=\frac{1}{L}\sum_{j=1}^L\lambda_j \left( \sum_{i=1}^L\mathbf{k}_{ji} \right)^2 \nonumber \\
        &\leq \lambda_{max}\left( \sum_{i=1}^L \mathbf{k}_{max,i} \right)^2 \leq \lambda_{max} \sum_{i=1}^L\mathbf{k}_{max,i}^2 = \lambda_{max}, \nonumber \\
        I(V_{local}) &= \| V_{local,i}M^{1/2}\|^2_F = \sum_{j=1}^L \lambda_j\mathbf{k}_{jj}^2 \geq \lambda_{min}\sum_{j=1}^L\mathbf{k}_{min,j}^2=\lambda_{min}, \nonumber
    \end{align}
    where the second inequality of global error importance is derived by Jensen's inequality. With plugging in those results, we can get
    \begin{align}
        R(B) = \frac{I(V_{global})}{I(V_{local})} &\leq \frac{\lambda_{max}}{\lambda_{min}}. \nonumber
    \end{align}
    Because the relationship, $M=(B^\top B)^{-1}$, $\lambda_{max}$ and $\lambda_{min}$ can be further derived as $1/s_{min}^2$ and $1/s_{max}^2$, respectively. Finally, the relative importance given $B$ is derived as follows:
    \begin{align}
        R(B) &\leq \frac{s_{max}^2}{s_{min}^2}. \nonumber
    \end{align}
\end{proof}

    

\section{Details of Sentence Curve Language Models}
\label{appendix:detail_sclm}

\begin{algorithm}[t]
\small
\caption{Sentence Curve Language Model (SCLM) Training}
\label{alg:sclm}

\KwIn{Data $(X,Y)$, embedding matrix $E$, mappings $B,B^{+}$, DLM noise process $q$, backbone $H_{\theta}$}
\KwOut{Updated parameters $\theta$ and $E$}

Sample $(X,Y)\sim p_{\text{data}}$\;
Compute clean embeddings $E_Y^0 = E\delta_Y$\;

Sample timestep $t$ and noised representation $E^t_Y \sim q(\cdot|E_Y^0)$\;

\tcp{\textcolor{red}{SCLM input mapping}}
Map to sentence curve:
$P_Y^t \leftarrow E^t_Y B^{+}$\;

\tcp{Backbone model process}
Predict denoised curve:
$\hat{P}_{Y,\theta}^0 \leftarrow H_{\theta}(P_Y^t,t,X)$\;

\tcp{\textcolor{blue}{SCLM output mapping}}
Map back to embedding space:
$\hat{E}_{Y,\theta}^0 \leftarrow \hat{P}_{Y,\theta}^0 B$\;

\tcp{Unified DLM loss}
Compute $\mathcal{L}(Y;\theta,E)$ with Eq.~\ref{eq:gdlm_objective} or Eq.~\ref{eq:mdlm_projection_layer} (left).

Update $\theta$ and $E$\;

\end{algorithm}

\begin{algorithm}[t]
\small
\caption{$K$-Sentence Curve Prediction}
\label{alg:k_sclm}

\KwIn{Input curve $P_Y^t$, mapping $B$, backbone $H_{\theta}$, number of curves $K$}
\KwOut{Predicted embedding $\hat{E}_{Y,\theta}^0$}

Duplicate curve:
$\{P_{Y,k}^t\}_{k=1}^{K}$\;

Concatenate curve-specific tokens to each curve:
$\tilde{P}_{Y,k}^t \leftarrow [\mathbf{c}_k \,\|\, P_{Y,k}^t], \quad k=1,\ldots,K$\;

Predict curves and output tokens:
$\{(\hat{P}_{Y,k}^0,\mathbf{c}'_k)\}_{k=1}^{K}
\leftarrow H_{\theta}(\{\tilde{P}_{Y,k}^t\}_{k=1}^{K},t,X)$

\tcp{Curve probability estimation}
Compute probabilities:
$\pi_k \leftarrow \mathrm{softmax}(g(\mathbf{c}'_k))$\;

\tcp{$g(\cdot)$: shallow MLP with GeLU activation}
Map to embedding space:
$\hat{E}_{Y,k}^0 \leftarrow \hat{P}_{Y,k}^0 B$\;

\eIf{training}{
    $\hat{E}_{Y,\theta}^0 \leftarrow \sum_{k=1}^{K}\pi_k \hat{E}_{Y,k}^0$\;
}{
    $\hat{E}_{Y,\theta}^0 \leftarrow \hat{E}_{Y,k^\ast}^0,\quad
    k^\ast=\arg\max_k \pi_k$\;
}

\end{algorithm}

In this section, we describe the algorithmic formulations of SCLM and $K$-sentence curve prediction, summarized in Algorithms~\ref{alg:sclm} and~\ref{alg:k_sclm}, respectively.

\section{Experimental Details}

\subsection{Details of Datasets}
\label{appendix:dataset_detail}

In our machine translation experiments, we use the IWSLT14 English–German and WMT14 English–German datasets. Data cleaning, tokenization, and byte-pair encoding (BPE) are performed using the Fairseq toolkit \cite{ott2019fairseq}, closely following the preprocessing pipelines adopted in prior works \cite{ye2023dinoiser, gao2024empowering}. After preprocessing, the IWSLT14 dataset contains approximately 160K/7K/7K sentence pairs for the train/validation/test splits, respectively, with a vocabulary size of 10K. The processed WMT14 dataset contains approximately 4.5M/3K/3K sentence pairs with a vocabulary size of 40K. For generic language modeling experiments, we use the LM1B dataset. All preprocessing steps follow the protocol described in MDLM paper \cite{sahoo2024simple}. Notably, we do not employ sentence packing during data loading. After preprocessing, the dataset contains approximately 30M training sentences and 300K evaluation sentences. 

\subsection{Details of DLM Experiments}
\label{appendix:dlm_experiment_detail}

\begin{table}[t]
\centering
\caption{Basic model and training configurations.}
\label{table:basic_configuration}
\resizebox{1.0\textwidth}{!}{
    \begin{tabular}{l|ccc}
    \hline
    \multicolumn{1}{l}{\textbf{Hyperparameter}} &
    \textbf{IWSLT14 En--De} &
    \textbf{WMT14 En--De} &
    \textbf{LM1B} \\
    \hhline{====}
    Model / Architecture     & \texttt{transformer\_iwslt\_de\_en} & \texttt{transformer\_wmt\_en\_de} & \texttt{small DiT} \\
    \# Encoder layers         & 6 & 6 & -- \\
    \# Decoder layers         & 6 & 6 & 12 \\
    $d_{\text{model}}$        & 512 & 512 & 768 \\
    $d_{\text{ff}}$           & 1024 & 2048 & 3072 \\
    \# Attention heads        & 4 & 8 & 12 \\
    Dropout                  & 0.3 & 0.1 & 0.1 \\
    Label smoothing          & 0.1 & 0.1 & -- \\
    Batch size (sentence or tokens)  & 8K tokens & 32K tokens & 512 sentences \\
    Context length           & -- & -- & 128 \\
    Optimizer                & AdamW & AdamW & AdamW \\
    Learning rate            & $5e^{-4}$ & $5e^{-4}$ & $3e^{-4}$ \\
    Gradient clipping        & 0.0 & 1.0 & 1.0 \\
    Training iterations      & 300K & 300K & 200K \\
    Checkpoint ensemble      & 5 & 5 & EMA (0.9999 Decay) \\
    Latent dimension         & 128 & 128 & -- \\
    Diffusion steps          & 2K & 2K & 1K \\
    Rescaling factor          & 4.0 & 3.5 & -- \\
    Self-conditioning         & O & O & -- \\
    Reverse steps (testing)   & 20 & 20 & 1K \\
    Length Beam size (testing)  & 5 & 7 & -- \\
    Noise Beam size (testing)  & 2 & 3 & -- \\
    \hline
    \end{tabular}
}
\end{table}

\begin{table}[t]
\centering
\caption{Tuned hyperparameters of proposed SCLM models. If the value of $\eta$ is float or integer, then the value is ratio or fixed value, respectively.}
\label{table:sclm_hparams}
\resizebox{0.9\textwidth}{!}{
    \begin{tabular}{lccccccc}
    \toprule
    \textbf{Hyperparameter} &
    \makecell{\textbf{IWSLT14} \\ \textbf{En$\rightarrow$De}} &
    \makecell{\textbf{IWSLT14} \\ \textbf{De$\rightarrow$En}} &
    \makecell{\textbf{IWSLT14} \\ \textbf{En$\rightarrow$De} \\ \textbf{(w/o KD)}} &
    \makecell{\textbf{IWSLT14} \\ \textbf{De$\rightarrow$En} \\ \textbf{(w/o KD)}} &
    \makecell{\textbf{WMT14} \\ \textbf{En$\rightarrow$De}} &
    \makecell{\textbf{WMT14} \\ \textbf{De$\rightarrow$En}} &
    \makecell{\textbf{LM1B}} \\
    \midrule
    $N_{\text{ratio}}$ & 2.0 & 3.0  & 2.0  & 2.0  & 2.0  & 3.0  & 3.0 \\
    $\eta$             & 5   & 0.01 & 0.01 & 0.01 & 0.1  & 5  & 5 \\
    $K$                & 1   & 1    & 3    & 4    & 1    & 1  & 1 \\
    \bottomrule
    \end{tabular}
}
\end{table}

In this section, we describe the details of our development baseline models and training procedures. For the backbone architecture of Difformer, we use the \texttt{transformer\_iwslt\_de\_en} and \texttt{transformer\_wmt\_en\_de} configurations provided by the Fairseq toolkit for the IWSLT14 and WMT14 experiments, respectively, following Difformer’s diffusion-based training framework\footnote{https://github.com/zhjgao/difformer}. We closely adhere to the official training scheme of Difformer for both the baseline models and our SCLMs. The full set of configuration details is summarized in `IWSLT14 En-De' and `WMT14 En-De' columns in Table~\ref{table:basic_configuration}. We note that our minimum Bayes risk (MBR) settings, including the length beam and noise beam sizes, are minimal or consistent with those used in prior works in Table~\ref{table:sacrebleu_results}. The total number of parameters for Difformer and SCLM models is 38.9M for IWSLT14 and 53.5M for WMT14, respectively. Importantly, SCLMs introduce almost no additional parameters beyond the baseline models.

For generic language modeling, we follow the official implementation and training setup of MDLM\footnote{https://github.com/kuleshov-group/mdlm}, where the backbone is a small-sized DiT \cite{peebles2023scalable}. We re-implemented the soft-masking strategy of SoftMask \cite{hersche2025soft} on top of the MDLM implementation with the standard SoftMsak-specific hyperparameters. Due to computational constraints, we reduce the number of training iterations to 200K and the context length to 128, while keeping all other settings unchanged. A detailed summary of the architecture and training protocol is provided in the `LM1B' column of Table~\ref{table:basic_configuration}; we refer to \cite{sahoo2024simple} for additional details. As a result, both MDLM and our SCLM variant contain approximately 139M parameters.

Finally, after validation-based tuning, we select the best SCLM configurations, summarized in Table~\ref{table:sclm_hparams}. We explore $N_{\text{ratio}} \in {2.0, 2.5, 3.0}$, $\eta_{\text{ratio}} \in {0.01, 0.05, 0.1}$, $\eta_{\text{fix}} \in {5, 10}$, and $K \in {1,2,3,4}$. We find that $K>1$ is particularly effective in the w/o KD setting, whereas $K=1$ performs best otherwise.

\begin{table}[t]
\centering
\small
\caption{Inference latency (seconds/sample) on IWSLT14 and WMT14 En$\rightarrow$De, measured on a single A4000 GPU. Values in parentheses indicate relative speed compared to the AR Transformer.}
\label{table:latency}
\begin{tabular}{lcc}
\toprule
\textbf{Model} & \textbf{IWSLT14 En$\rightarrow$De} & \textbf{WMT14 En$\rightarrow$De} \\
\midrule
AR Transformer & 0.3016 (0.00\%) & 0.4020 (0.00\%) \\
\addlinespace
Difformer & 0.2191 (27.36\%) & 0.2473 (38.47\%) \\
\addlinespace
SCLM ($N_{ratio}=2.0, \eta_{ratio}=0.1, K=1$) & 0.2201 (27.01\%)) & 0.2477 (38.39\%) \\
SCLM ($N_{ratio}=2.5, \eta_{ratio}=0.1, K=1$) & 0.2298 (23.82\%) & 0.2540 (36.82\%) \\
SCLM ($N_{ratio}=3.0, \eta_{ratio}=0.1, K=1$) & 0.2271 (24.70\%) & 0.2663 (33.75\%) \\
\addlinespace
SCLM ($N_{ratio}=2.0, \eta_{ratio}=0.01, K=1$) & 0.2265 (24.90\%) & 0.2443 (39.22\%) \\
SCLM ($N_{ratio}=2.0, \eta_{ratio}=0.05, K=1$) & 0.2396 (20.58\%) & 0.2537 (36.90\%) \\
SCLM ($N_{ratio}=2.0, \eta_{fix}=5, K=1$) & 0.2316 (23.23\%) & 0.2510 (37.56\%) \\
SCLM ($N_{ratio}=2.0, \eta_{fix}=10, K=1$) & 0.2350 (22.10\%) & 0.2590 (35.57\%) \\
\bottomrule
\end{tabular}
\end{table}

Experiments on WMT14 and LM1B are conducted on RTX 4090 or 3090 GPUs, while IWSLT14 uses an A4000 GPU. We also report inference latency comparisons between Difformer and SCLMs under IWSLT14 and WMT14 settings. Table~\ref{table:latency} shows inference time (seconds per sample) for all models. With $K=1$, SCLMs incur only modest overhead, increasing latency by about 5\% on IWSLT14 and 3\% on WMT14 compared to Difformer. Latency tends to grow with larger $N_{\text{ratio}}$ due to increased counts of processing tokens. Despite this overhead, SCLMs retain substantial speed advantages over AR Transformers, highlighting their practical efficiency.

\section{Additional Experimental Analyses}

\subsection{Ablation Study on IWSLT14 De$\rightarrow$En}
\label{appendix:ablation_study}

\begin{table}[t]
\centering
\caption{Ablation study of the proposed SCLM model based on IWSLT14 De$\rightarrow$En.}
\label{table:ablation_study}
\resizebox{0.7\textwidth}{!}{
    \begin{tabular}{lcc}
    \toprule
    \textbf{Model} &
    \textbf{Valid BLEU} &
    \textbf{Test BLEU} \\
    \midrule
    \multicolumn{3}{l}{\textit{Baselines}} \\
    Transformer & 35.71 & 33.83 \\
    Difformer   & 31.81 & 31.19 \\
    \midrule
    \multicolumn{3}{l}{\textit{Default Config.}} \\
    SCLM ($N_{ratio}=2.0,\eta_{ratio}=0.1,K=1$) & 33.25 & 32.35 \\
    \midrule
    \multicolumn{3}{l}{\textit{Varying $N_{\text{ratio}}$}} \\
    SCLM ($N_{ratio}=2.0,\eta_{ratio}=0.1,K=1$) & 33.25 & 32.35 \\
    SCLM ($N_{ratio}=2.5,\eta_{ratio}=0.1,K=1$) & 33.28 & 32.04 \\
    SCLM ($N_{ratio}=3.0,\eta_{ratio}=0.1,K=1$) & \textbf{33.63} & 32.41 \\
    \midrule
    \multicolumn{3}{l}{\textit{Varying $\eta$}} \\
    SCLM ($N_{ratio}=3.0,\eta_{ratio}=0.01,K=1$) & \textbf{33.79} & 32.56 \\
    SCLM ($N_{ratio}=3.0,\eta_{ratio}=0.05,K=1$) & 33.50 & 32.35 \\
    SCLM ($N_{ratio}=3.0,\eta_{ratio}=0.1,K=1$)  & 33.63 & 32.41 \\
    SCLM ($N_{ratio}=3.0,\eta_{fix}=5,K=1$)  & 33.37 & 32.25 \\
    SCLM ($N_{ratio}=3.0,\eta_{fix}=10,K=1$) & 33.11 & 32.13 \\
    \midrule
    \multicolumn{3}{l}{\textit{Varying $K$}} \\
    SCLM ($N_{ratio}=3.0,\eta_{ratio}=0.01,K=1$) & 33.79 & 32.56 \\
    SCLM ($N_{ratio}=3.0,\eta_{ratio}=0.01,K=2$) & 33.57 & 32.51 \\
    SCLM ($N_{ratio}=3.0,\eta_{ratio}=0.01,K=3$) & 33.60 & 32.30 \\
    SCLM ($N_{ratio}=3.0,\eta_{ratio}=0.01,K=4$) & 33.65 & 32.18 \\
    \bottomrule
    \end{tabular}
}
\end{table}

In this section, we present an ablation study that illustrates our hyperparameter tuning process based on validation performance. We conducted a grid search over the predefined ranges of each hyperparameter described in Section~\ref{appendix:dlm_experiment_detail}. Table~\ref{table:ablation_study} reports the ablation results on the IWSLT14 De$\rightarrow$En task. As shown, we select the configuration that achieves the highest validation BLEU score. Notably, we observe consistent performance improvements over the baseline.

\subsection{Information Preservation of Sentence Curve Mapping}
\label{appendix:information_preservation_mapping}

\begin{figure}[]
    \centering
    \includegraphics[width=1.0\linewidth]{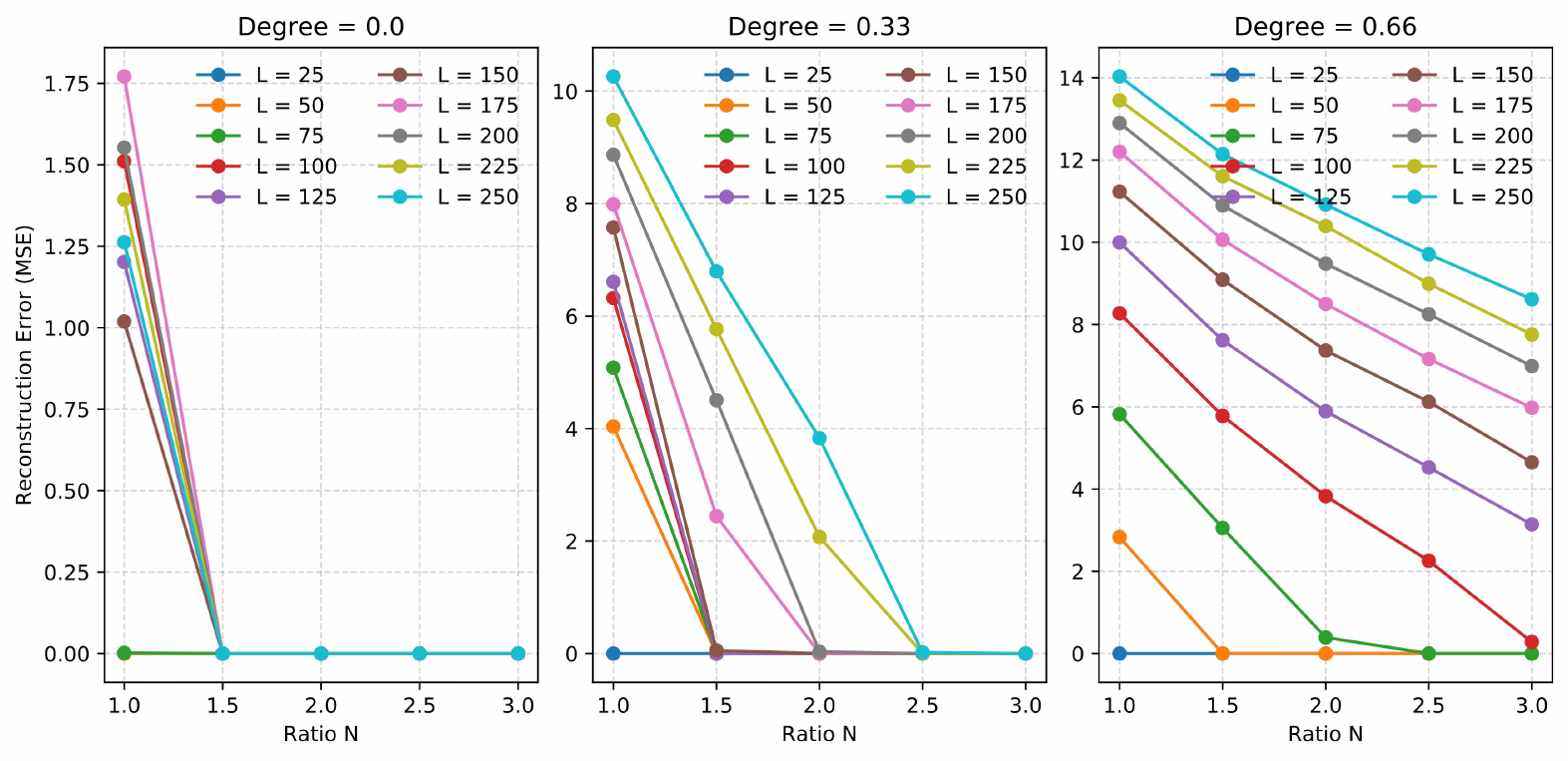}
    \caption{Reconstruction error results of sentence curve inverse mapping/mapping: $E \xrightarrow{B^+} P \xrightarrow{B} E$ process, depending on different sentence length $L$, $N_{ratio}$, and $\eta_{ratio}$.
    }
    \label{fig:reconstruction_analysis}
\end{figure}

To assess the risk of information loss in the $B$ and $B^{+}$ mapping process, we conduct a reconstruction error analysis. Given a length-$L$ random noise sequence sampled from a standard Gaussian distribution, we first map the sequence to the $P$-space using $B^{+}$ and then reconstruct it back to the original space using $B$. We compute the average mean squared error (MSE) between the original and reconstructed sequences. We vary the hyperparameters $L$, $N_{ratio}$, and $\eta_{ratio}$ over the ranges $\{25,50,\ldots,225,250\}$, $\{1.0,1.5,2.0,2.5,3.0\}$, and $\{0.0,0.33,0.66\}$, respectively. The minimum curve degree $\eta$ is fixed to 2, ensuring that at least two control points contribute to each word even when $\eta_{ratio}=0.0$.

Figure~\ref{fig:reconstruction_analysis} reports results averaged over 100 random noise sequences. We observe that shorter sentence lengths yield lower reconstruction errors than longer ones. In addition, increasing the number of control points reduces reconstruction error, whereas higher curve degrees $\eta$ lead to larger errors. Overall, these results suggest that choosing $N_{ratio} \geq 2.0$ and $\eta_{ratio} < 0.33$ provides a safe hyperparameter regime for preserving information in short- to medium-length sentences.

\subsection{Sentence Curve Generation Examples}
\label{appendix:sentence_curve_generation_examples}

\begin{figure}[]
    \centering
    \includegraphics[width=1.0\linewidth]{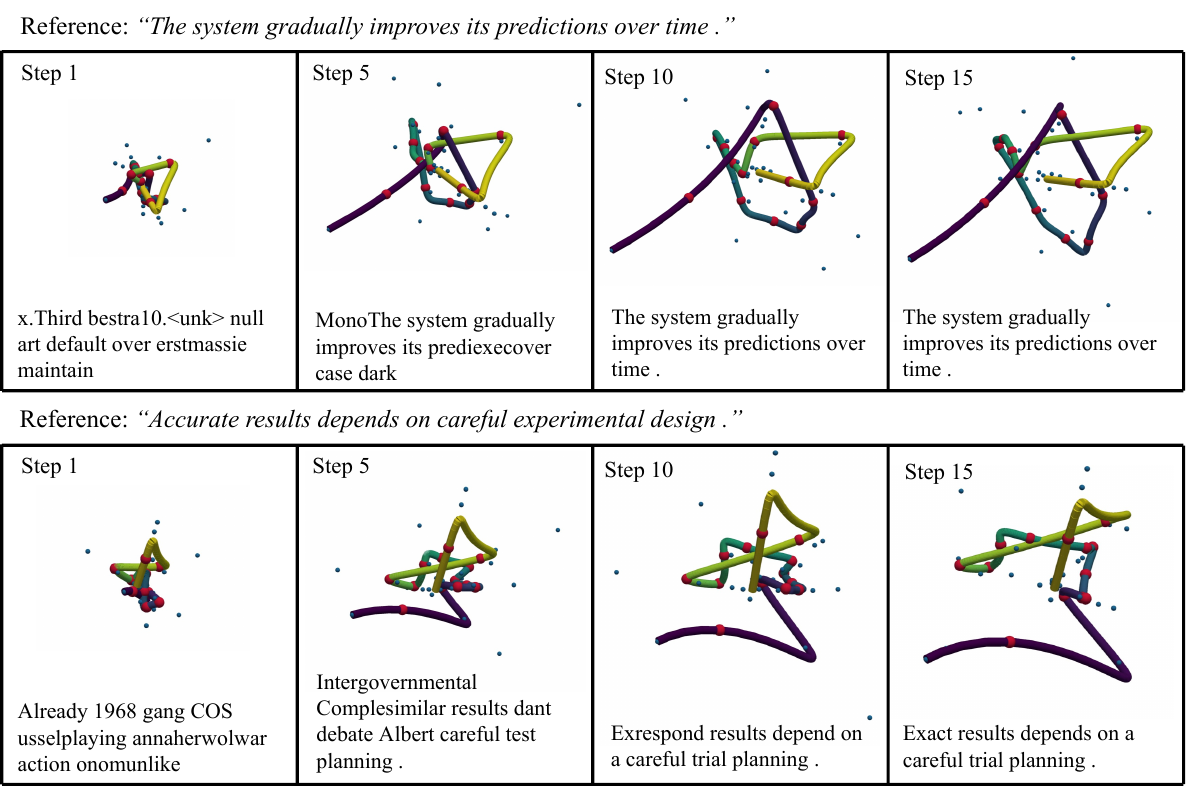}
    \caption{Extra examples of sentence curve generation by SCLM in the translation task. Blue dots represent the estimated control points. Red points on the curve represent the final word embeddings.
    }
    \label{fig:extra_curve_generation_examples}
\end{figure}

In this section, we present additional sentence curve generation examples, analogous to the illustration in the right side of Figure~\ref{fig:front_page}. All those examples are generated using our best-performing WMT14 De$\rightarrow$En checkpoint. Figure~\ref{fig:extra_curve_generation_examples} shows two extra representative sentence curve generation histories. As illustrated, SCLM smoothly denoises the sentence curve from early diffusion steps and progressively refines it toward a curve that maps to the reference sentence embeddings. Together with the examples in Figure~\ref{fig:front_page} and Figure~\ref{fig:extra_curve_generation_examples}, all visualizations are based on the intermediate denoising variables $\hat{\mathbf{e}}_{Y,\theta}^t$.



\end{document}